\theoremstyle{definition}
\newtheorem{definition}{Definition}[section]
\theoremstyle{plain}
\newtheorem{theorem}{Theorem}[section]
\newtheorem{corollary}{Corollary}[theorem]
\newtheorem{lemma}[theorem]{Lemma}
\theoremstyle{remark}
\newtheorem*{remark}{Remark}
\titlespacing*{\section} {0pt}{2.5ex}{0.6ex}
\titlespacing*{\subsection} {0pt}{1.5ex}{0.6ex}
\renewcommand{\thesection}{\arabic{section}}
\renewcommand{\thesubsection}{\thesection.\arabic{subsection}}
\renewcommand{\p@subsection}{}
\renewcommand{\p@subsubsection}{}
\begin{document}

\newcommand{\bing}[1]{{\color{blue}{\small\bf\sf [bing: #1]}}}
\newcommand{\changnan}[1]{{\color{red}{\small\bf\sf [changnan: #1]}}}

\title{\LARGE{A Theory for Length Generalization in Learning to Reason}


\vspace{-8mm}
}
\author{\textbf{Changnan Xiao\textsuperscript{1} ~and ~Bing Liu\textsuperscript{2} 
}
\vspace{+2mm}
\\
\textsuperscript{1}\small{ChangnXX.github.io}\\
\textsuperscript{2} \small{Department of Computer Science, University of Illinois Chicago}\\
\small{changnanxiao@gmail.com ~~~~ liub@uic.edu}}
\maketitle
\thispagestyle{fancy} 
\pagestyle{plain}
\vspace{7mm}

\begin{abstract}
\textit{Length generalization} (LG) is a challenging problem in learning to reason. It refers to the phenomenon that when trained on reasoning problems of smaller lengths or sizes, the resulting model struggles with problems of larger sizes or lengths. Although LG has been studied by many researchers, the challenge remains. This paper proposes a theoretical study of LG for problems whose reasoning processes can be modeled as DAGs (\textit{directed acyclic graphs}). The paper first identifies and proves the conditions under which LG can be achieved in learning to reason. It then designs problem representations based on the theory to learn to solve challenging reasoning problems like \textit{parity}, \textit{addition}, and \textit{multiplication}, using a Transformer to achieve perfect LG. 
\end{abstract}

\section{Introduction}
\label{sec-intro}
Large language models (LLMs) 
have been shown to perform reasoning tasks remarkably well~\citep{brown2020language,suzgun2022challenging,saparov2022language,liu2023evaluating,xu2023large}.~However, evaluations also revealed some limitations. For example, LLMs 
often have difficulties in simple \textit{addition} and \textit{multiplication} of large numbers~\citep{nogueira2021investigating,qian2022limitations}. 
A popular solution to improve reasoning is to use  \textit{Scratchpad}~\citep{nye2021show} or \textit{Chain of Thought} ({CoT})~\cite{wei2022chain}. The idea is to add the \textit{intermediate steps} for each reasoning problem in the training data. For example, the training sample for calculating $3 + 2 \times 1$ may be presented as $3 + 2 \times 1 = 3 + 2 = 5$ rather than $3 + 2 \times 1 = 5$. CoT has been used to improve reasoning \cite{anil2022exploring,liu2023goat,lee2023teaching}. 
However, 
\cite{dziri2023faith} and \cite{kazemnejad2023impact} reported that even with detailed  CoT steps, 
the learned models still fail to generalize for several reasoning problems. For example, they showed that when trained with smaller problems, e.g., multiplication of two smaller numbers such as $1234 \times 135$ based on CoT training data, the model cannot generalize to larger problems (e.g., $235469 \times 44562$). This problem is called \textbf{\textit{length generalization}} (\textbf{LG})~\cite{anil2022exploring,zhang2022unveiling,kazemnejad2023impact}.    

The paper proposes a theory for LG in learning to reason whose step-by-step reasoning processes can be formulated as \textit{directed acyclic graph} (DAG) structures.\footnote{~DAGs have been popularly used to represent reasoning problems \citep{shao2022chaining,cao2021bottom,huang2022directed,feng2023towards}. Some reasoning problems cannot be modeled as DAGs, e.g., temporal and spatial reasoning problems.} We will not study the case where the CoT steps are not given but only the direct input and output (e.g., $3+2 \times 1 = 5$) are provided as it has been proven that this case isn't learnable in general~\cite{wies2023sub,feng2023towards,malach2023auto}. 
Our theory introduces a new notion of \textit{maximal input element distance} $R$ of the elements involved in each reasoning step and a new concept of $(n, r)$-consistency. We first prove that for a CoT formulation of a reasoning problem, if it has the property of $R < \infty$, it is learnable with LG. We further show that if a reasoning problem with $R = \infty$ but it is $(n, r)$-consistent, it is also learnable with LG. 
Empirical evidence is given to verify the theory using a vanilla Transformer to learn many challenging reasoning tasks, including parity, addition, and multiplication, to achieve perfect LG. 

\section{Overview of the Proposed LG Theory}
\label{sec.overview}

{\color{black}We show that \textbf{(1)} given the DAG structure of a reasoning task, the condition for learning the \textbf{causal function} (the function performing a single reasoning step), \textbf{(2)} given the DAG structure and a well-learned causal function, a \textbf{recursive formula} being able to solve the LG problem, and \textbf{(3)} given only unstructured sequence (e.g., $3+2 \times 1$), the condition for learning the structure (i.e., predicting the elements used in the next reasoning step) and solving the LG problem.

To illustrate our notations, we take the problem of calculating $3 + 2 \times 1$ with the CoT formulation, $3 + 2 \times 1 = 3 + 2 = 5$, as an example. 
A \textit{causal function} infers one step in the reasoning/calculation process as specified in CoT. In this example, $2 \times 1 = 2$ and $3 + 2 = 5$ are both one casual/reasoning step. 
In the arithmetic calculation on prime field $F_p$ with the above CoT formulations, the causal function is $f: \mathbf{X} \rightarrow F_p$, where $\mathbf{X} = F_p \times \{+, -, \times, /\} \times F_p$ is the \textbf{input space} of the causal function {($F_p=\{0, 1, \dots, p - 1\}$)}. Different CoT formulations may give different causal functions (see Sec.~\ref{sec:experiment}). 

\begin{figure}
\caption{
An example DAG. 
} 
\vspace{+2mm}
\center
\includegraphics[width=0.25\linewidth]{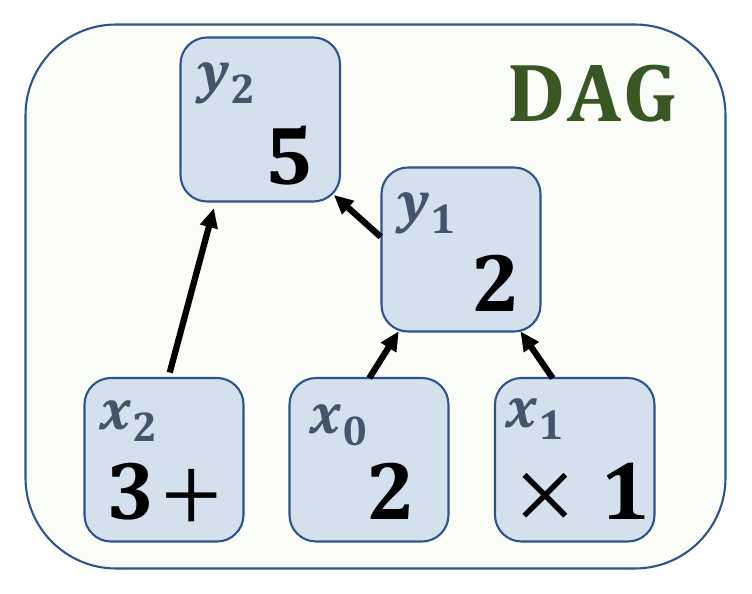}
\label{fig:example_mdp_dag}
\end{figure}

{The calculation process of $3 + 2 \times 1$ is represented as the DAG in Fig.~\ref{fig:example_mdp_dag}.}~Its topological sorting gives $x_0 = 2, x_1 = (\times, 1), x_2 = (3, +), y_1 = f(x_0, x_1) = 2 \times 1 = 2, y_2 = f(x_2, y_1) = 3 + 2 = 5$, where $f$ is the causal function. 

We define a \textit{recursive formula} as a function that recursively applies the causal function step-by-step to solve a reasoning problem according to its structure, e.g., $3 + 2 \times 1 = 3 + 2 = 5$ in Fig.~\ref{fig:example_mdp_dag}. In Sec.~\ref{sec: dag}, we assume that the DAG structure is given and we prove the following: 
\begin{itemize}
\item The causal function can be perfectly learned (called \textbf{well-learned}) only when the input space of the causal function is finite, $|\mathbf{X}| < \infty$. 
\item When evaluating with the DAG structure given, solving the reasoning problem by recursively applying a well-learned causal function can generalize from smaller training problem sizes to larger test problem sizes.
\end{itemize}
Sec. \ref{sec:cot_and_others} studies the realistic scenario where the DAG structure is unknown and only unstructured sequence data in CoT is given in training, e.g., a sequence of elements like $3 + 2 \times 1 = 3 + 2 = 5$. Sec. \ref{sec:cot_and_others} focuses on the following,
\begin{itemize} 
\item Learning to predict which elements should be the input to the well-learned causal function in the next reasoning step. We propose an important notion $R$, the \textbf{maximal input element distance} for a causal/reasoning step. For $3 + 2 \times 1$, we have $R=2$ because the elements that should be calculated next are in a window of length $3$, e.g., $2 \times 1$, where the maximal input elements distance (between $2$ and $1$) is $2$. 
\item Proving the sufficient condition for learning with LG is $R < \infty$ {for a class of problem, e.g., \textit{parity}.}
\end{itemize}
Sec.~\ref{sec: (n, r)-consistent} extends the theory to deal with $R = \infty$. 
\begin{itemize}
\item It introduces a more general condition, called \textbf{$(n, r)$-consistency}, to generalize $R$ and proves any reasoning problem with $(n, r)$-consistent can achieve LG regardless of whether $R < \infty$ or $R = \infty$. $R < \infty$ is a special case of $(n, r)$-consistent. 
\end{itemize}
Sec. \ref{sec:experiment} shows that straightforward CoT formulations of addition and multiplication do not satisfy the condition, and thus have the LG issue. However, with different CoT formulations, they are $(n, r)$-consistent and can be learned to achieve LG.

Learnability of reasoning problems using CoT has been reported in~\cite{wies2023sub,feng2023towards,malach2023auto} for neural networks. However, these studies are all under i.i.d and given problem length/size $N$. They do not cover LG. We discuss this in greater detail in Sec.~\ref{sec.related}. 
Our work can be seen as complementary to theirs in the context of neural networks (as our results are independent of specific learning paradigms or algorithms). Their learnability results still apply.~We add conditions under which the learned function can extrapolate to larger lengths than $N$ to achieve LG.



In summary, for learning to reason and overcome the {LG} issue, we propose \textbf{three sufficient conditions}, \textbf{(i)} the input space $\mathbf{X}$ of a causal/reasoning step of the reasoning problem is \textit{finite}, \textbf{(ii)} the problem should be solved \textit{recursively} based on CoT,
and \textbf{(iii)} its CoT formulation is $(n,r)$-\textit{consistent}.} 


\section{The Proposed LG Theory}

\subsection{Given the Directed Acyclic Graph (DAG)}
\label{sec: dag}

This subsection assumes that the DAG structure representing the reasoning process of a problem is given. 
A topological sorting can order the vertices of a DAG, where every edge leaves a front vertex and enters a later vertex. 

Denote a DAG as $G = (V, E)$, where $V$ is the set of all vertices and $E$ is the set of all edges. 
For easy understanding, we use $|G|$ $(= |V|)$ to represent the number of vertices in $G$. 
Denote $u \overset{e}{\rightarrow} v$ as when $v$ is reachable directly from $u$ by the edge $e$. 
Denote $p(v) = \{u \in V |\, \exists\, e \in E, u \overset{e}{\rightarrow} v\}$ to be all \textit{preceding} vertices (immediate in-neighbors) that can reach $v$ directly and 
the \textit{in-degree} of each vertex $v$ as $|p(v)|$. 

For a reasoning problem structured as a DAG, when $u \overset{e}{\rightarrow} v$, we say that $u$ is an \textbf{input vertex} and $v$ is the \textbf{causal vertex}. 
Since a causal vertex can also be an input vertex of some other vertices, we do not distinguish the domain and the range. With a slight abuse of notation, we also use $v$ to represent the value of vertex $v$. 
We simply denote $X$ to be both the domain and the range of all vertices. 
Let $f: X^{\sup |p(v)|} \rightarrow X$ be the \textbf{causal function}, which is $v = f(p(v))$. The \textbf{input space} of $f: X^{\sup |p(v)|} \rightarrow X$ is $\mathbf{X} = X^{\sup |p(v)|}$. 
Note that $G$ defines the structure of the reasoning process and $f$ defines the values. 
For any $G = (V, E)$, let $V = \{v_1, \dots, v_{|G|}\}$ be a topological sorting of the vertices in $G$. 
Note that the vertices with $|p(v)| = 0$ are pure input vertices of the graph.
We say
\begin{equation}
G_f(\{v_i \,|\, i \leq |G|,\, |p(v_i)| = 0\}) = (v_1, \dots, v_{|G|}), 
\end{equation}
\text{where} \
\begin{equation*}
\left\{
\begin{aligned}
&v_1 = f(p(v_1)), \\
&\dots \\
&v_n = f(p(v_n)),
\end{aligned}
\right.
\label{eq: def_dag}
\end{equation*}
instantiates or assigns values to all the vertices by $f$. 

\begin{figure}
\caption{
An example of notations. 
} 
\vspace{+2mm}
\center
\includegraphics[width=0.45\linewidth]{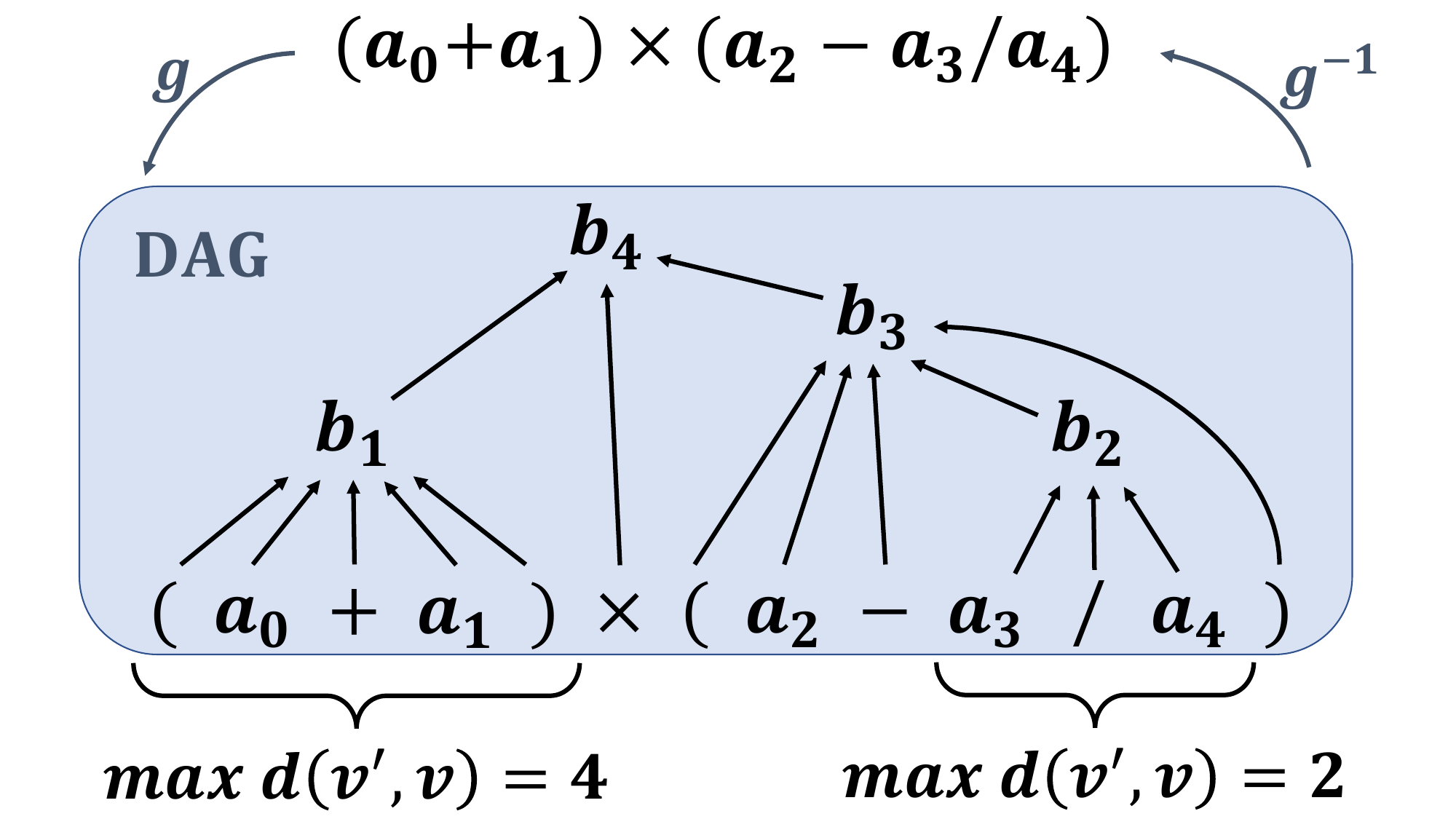}
\label{fig:example_dag}
\end{figure}


For example, arithmetic problems in a prime field $F_p$ ($= \mathbf{Z} / p\mathbf{Z}$) are DAGs. To calculate $(a_0 + a_1) \times (a_2 - (a_3 / a_4))$, we can use the steps:  $b_1 = f(a_0, +, a_1)$, $b_2 = f(a_3, /, a_4)$, $b_3 = f(a_2, -, b_2)$, and $b_4 = f(b_1, +, b_3)$, which define a DAG. The DAG defines the calculation steps and the causal function $f$ defines the arithmetic rules in $F_p$. When $f$ changes, the induced values on $G$ change. That's why we say $G_f$ is the induced graph of $G$ defined by $f$. 

We first study the property of $f$. 
Denote the training dataset of $f$ as $D$, which is a subset of the input space as 
\begin{equation}
D \subseteq \{(f(p(v)), p(v)) |\, v \in V\}.
\end{equation}
$D$ contains vertices from any $G$. For simplicity, denote $D \subseteq \mathbf{X} = X^{\sup |p(v)|}$ and $f(\mathbf{X}) = \{f(p(v))|\, p(v) \in \mathbf{X}\}$. We assume $|D| < \infty$, which is true for any training dataset. 

\vspace{+2mm}
\begin{theorem}
\label{thm: full_causal_function}
For $|X| < \infty$ and $\sup |p(v)| < \infty$, i.e., $|\mathbf{X}| < \infty$, if $D = \mathbf{X}$, 
then there exists an approximation function  $\Hat{f}: X^{\sup |p(v)|} \rightarrow X$,  s.t. $\Hat{f}(p(v)) = f(p(v)),\,\forall\, p(v) \in \mathbf{X}$.
\end{theorem}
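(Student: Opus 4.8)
The plan is to prove the statement by explicit, pointwise construction, since the hypothesis $D = \mathbf{X}$ means the training data already pins down the value of $f$ at every point of its domain. First I would note that the input space is genuinely finite: writing $k := \sup |p(v)|$, the assumptions $|X| < \infty$ and $k < \infty$ give $|\mathbf{X}| = |X|^{k} < \infty$. Hence $\mathbf{X}$ is a finite set, and any function on it is nothing more than a finite table of input--output pairs.

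Because $D = \mathbf{X}$, for every $p(v) \in \mathbf{X}$ the pair $(f(p(v)), p(v))$ lies in $D$, so the correct output is available at each of the finitely many inputs. I would then simply \emph{define} $\hat{f} : X^{k} \rightarrow X$ by setting $\hat{f}(p(v)) := f(p(v))$ for each $p(v) \in \mathbf{X}$. This prescription is well-defined as a function precisely because $f$ is itself a function: each input carries a single output, so no conflicting assignments arise. By construction $\hat{f}(p(v)) = f(p(v))$ for all $p(v) \in \mathbf{X}$, which is exactly the conclusion.

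I do not anticipate a genuine obstacle here, because the theorem asserts only the \emph{existence} of some $\hat{f} : X^{k} \rightarrow X$ agreeing with $f$ on $\mathbf{X}$, with no constraint on its form, regularity, or on the learning algorithm that would produce it. The entire content reduces to the observation that a finite domain fully covered by data admits perfect memorization, and the one point worth stating carefully, well-definedness, is immediate from $f$ being single-valued. As I read it, the real role of this result is to serve as the positive baseline against which the subsequent corollaries (where either $D \subsetneq \mathbf{X}$ or $|X| = \infty$) establish the impossibility of determining $f$ on the unseen inputs; so in writing the proof I would keep it short and foreground that this is the degenerate, fully-observed case.
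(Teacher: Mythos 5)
Your proof is correct for the statement as literally written: with $|\mathbf{X}| = |X|^{\sup|p(v)|} < \infty$ and $D = \mathbf{X}$, the tabulated function $\hat{f} := f$ witnesses existence, and well-definedness is immediate since $f$ is single-valued. The paper, however, takes a genuinely different route: it invokes its Lemma~\ref{lem: interpolation}, which builds a \emph{continuous} interpolant on a metric space via the kernel $K_\epsilon(x,x') = \epsilon / d_X(x,x')$ that blows up at the data points, and then specializes it with $X^{\sup|p(v)|}$ as the domain and $X$ as the range. The difference matters for what the result is used for, not for its truth. First, that lemma is the single workhorse behind all of the paper's existence results, including the negative Corollaries~\ref{thm: not_full_causal_function} and~\ref{thm: not_full_causal_function_inf_x}, where one must exhibit a function that agrees with $f$ on $D$ yet disagrees on prescribed points off $D$; your memorization argument has no analogue there, since in those proofs one interpolates a deliberately corrupted dataset $D^{err}$ and the interpolating machinery is what certifies existence. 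Second, continuity is what ties the existence claim to actual learning: the paper frames the lemma as a finite-data strengthening of the universal approximation theorem, so the asserted $\hat{f}$ is the kind of object realizable by neural networks rather than an arbitrary set-theoretic map. Your argument buys brevity and correctly exposes this theorem as the degenerate, fully-observed case; the paper's argument buys a constructive, continuous witness and a uniform technique that carries over to the impossibility results that follow.
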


The proof is given in \textit{Appendix}~\ref{sec.dynamic.causal}.~The assumption $\sup |p(v)| < \infty$ makes the domain of $f$ have finite dimensions, which guarantees the feasibility of learning $f$ with $|D| < \infty$. 
When $D \neq \mathbf{X}$, we have the following \textbf{negative} or \textbf{impossibility} corollaries (proofs are given in \textit{Appendix}~\ref{sec.dynamic.causal}).

\vspace{+2mm}
\begin{corollary}
\label{thm: not_full_causal_function}
For $|X| < \infty$ and $\sup |p(v)| < \infty$, i.e. $|\mathbf{X}| < \infty$, if $|f(\mathbf{X})| > 1$ and $D \neq \mathbf{X}$, then there exists an approximation function $\Hat{f}: X^{\sup |p(v)|} \rightarrow X$ s.t. $\Hat{f}(p(v)) = f(p(v)),\,\forall\,p(v) \in D$ and $\Hat{f}(p(v)) \neq f(p(v)),\,\forall\,p(v) \in  \mathbf{X} \backslash D$. 
\end{corollary}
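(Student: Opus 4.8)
The plan is to prove the statement by an explicit pointwise construction of $\Hat{f}$, exploiting the fact that $\mathbf{X}$ is finite and that the hypotheses leave at least one ``wrong'' output value available at every point. First I would record the two consequences of the hypotheses that drive the construction. Since $|X| < \infty$ and $\sup |p(v)| < \infty$, the input space $\mathbf{X} = X^{\sup |p(v)|}$ is finite, so \emph{any} assignment of a value in $X$ to each element of $\mathbf{X}$ defines a legitimate approximation function $\Hat{f}: X^{\sup |p(v)|} \rightarrow X$ (no continuity, degree, or regularity constraint is imposed on an approximation function here, so we are free to specify it as a finite table). Moreover, because $f(\mathbf{X}) \subseteq X$ and $|f(\mathbf{X})| > 1$, we have $|X| \geq 2$; hence for every $p(v) \in \mathbf{X}$ the set $X \backslash \{f(p(v))\}$ is nonempty. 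This is the single place where the hypothesis on the range is actually consumed.

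With these two facts in hand, I would define $\Hat{f}$ by cases. On the training set, set $\Hat{f}(p(v)) = f(p(v))$ for all $p(v) \in D$. On the complement, for each $p(v) \in \mathbf{X} \backslash D$ pick any element $\Hat{f}(p(v)) \in X \backslash \{f(p(v))\}$, which is possible by the nonemptiness just established (and which needs no choice principle beyond the finite case, since $\mathbf{X}$ is finite). Because $D \neq \mathbf{X}$, the complement $\mathbf{X} \backslash D$ is itself nonempty, so the second clause genuinely assigns values and the conclusion is not vacuous. Verifying the two required properties is then immediate from the definition: the first branch gives $\Hat{f}(p(v)) = f(p(v))$ for all $p(v) \in D$, and the second branch gives $\Hat{f}(p(v)) \neq f(p(v))$ for all $p(v) \in \mathbf{X} \backslash D$.

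There is no substantive analytic obstacle in this argument; it is a counting/adversarial construction rather than an approximation estimate, so the ``hard part'' amounts only to noticing that $|f(\mathbf{X})| > 1$ forces $|X| \geq 2$ and thus supplies a distinct alternative value at each unseen point. The point worth stressing alongside the proof is its interpretation: a training set $D$ strictly smaller than $\mathbf{X}$ can never pin down $f$ on the unseen region, regardless of learning algorithm or hypothesis class, which is exactly the impossibility that motivates the finiteness requirement ($|\mathbf{X}| < \infty$ together with $D = \mathbf{X}$) of Theorem~\ref{thm: full_causal_function}.
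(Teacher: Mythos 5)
Your proof is correct, and it takes a more elementary route than the paper's. The paper proves this corollary by augmenting the training set with corrupted labels and then invoking its interpolation lemma (Lemma~\ref{lem: interpolation}): it picks a point $(f(p(v_0)), p(v_0)) \in \mathbf{X}\backslash D$, chooses $v^{err} \neq f(p(v_0))$ from the range $f(\mathbf{X})$, forms $D^{err} = D \cup \{(v^{err}, p(v_0))\}$, and interpolates $D^{err}$ to get a continuous $\Hat{f}$ that is exact on $D$ and wrong at $p(v_0)$. Notably, as literally written the paper corrupts only that \emph{single} unseen point, so it exhibits a function wrong at one point of $\mathbf{X}\backslash D$, whereas the statement asserts wrongness at \emph{every} point of $\mathbf{X}\backslash D$; your construction, which assigns a wrong value from $X \backslash \{f(p(v))\}$ at each unseen point separately, actually delivers the full claim (the paper's argument would too, if one corrupts all of $\mathbf{X}\backslash D$ before interpolating, which is clearly the intent). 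Your observation that $|f(\mathbf{X})| > 1$ forces $|X| \geq 2$ and hence a spare wrong value at every point is the same use of that hypothesis as the paper's choice of $v^{err}$, just applied pointwise. What the two approaches buy: the paper's detour through Lemma~\ref{lem: interpolation} produces a \emph{continuous} approximation function and keeps this corollary uniform with the proofs of Theorem~\ref{thm: full_causal_function} and Corollary~\ref{thm: not_full_causal_function_inf_x}, which all route through the same lemma; your finite-table construction dispenses with the lemma entirely, and since $\mathbf{X}$ is finite (hence discrete), the resulting table is trivially continuous anyway, so nothing is lost.
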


\vspace{+2mm}
\begin{corollary}
\label{thm: not_full_causal_function_inf_x}
For $\max(|X|, \sup |p(v)|) = \infty$, i.e. $|\mathbf{X}| = \infty$, if $|f(\mathbf{X})| > 1$, for $\forall\, m > 0$, there exists an approximation function $\Hat{f}: X^{\sup |p(v)|} \rightarrow X$ s.t. $\Hat{f}(p(v)) = f(p(v)),\,\forall\,p(v) \in D$ and $|\{p(v) | \Hat{f}(p(v)) \neq f(p(v))\}| > m$. 
\end{corollary}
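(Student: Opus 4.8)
The plan is to reduce this to the construction already used for the finite case in Corollary~\ref{thm: not_full_causal_function}, exploiting the fact that here the complement $\mathbf{X}\setminus D$ is infinite rather than merely nonempty. The statement is an existence claim about a function $\hat f: X^{\sup |p(v)|}\to X$, with no restriction to any particular hypothesis class, so it suffices to exhibit one such function explicitly.

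First I would record two elementary consequences of the hypotheses. From $|\mathbf{X}|=\infty$ together with the standing assumption $|D|<\infty$, the set $\mathbf{X}\setminus D$ is infinite. From $|f(\mathbf{X})|>1$, and since $f(\mathbf{X})\subseteq X$, we get $|X|\ge 2$; hence for every $p(v)\in\mathbf{X}$ the set $X\setminus\{f(p(v))\}$ is nonempty, so a value differing from $f(p(v))$ is always available. This is exactly the point where the hypothesis $|f(\mathbf{X})|>1$ is used: without at least two available output values, one could not guarantee a differing value at each perturbed point.

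Next I would define $\hat f$ by cases: set $\hat f(p(v))=f(p(v))$ for every $p(v)\in D$, and for every $p(v)\in\mathbf{X}\setminus D$ pick any $\hat f(p(v))\in X\setminus\{f(p(v))\}$. By construction $\hat f$ agrees with $f$ on all of $D$, while the disagreement set $\{p(v)\mid \hat f(p(v))\neq f(p(v))\}$ equals $\mathbf{X}\setminus D$, which is infinite. Since an infinite set has more than $m$ elements for every finite $m>0$, the required bound $|\{p(v)\mid \hat f(p(v))\neq f(p(v))\}|>m$ holds for all $m$ simultaneously. If a finite witness is preferred instead, it is equally clean to fix $m+1$ distinct points of $\mathbf{X}\setminus D$, perturb $f$ only there to differing values, and leave $\hat f=f$ elsewhere, yielding a disagreement set of size exactly $m+1>m$.

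I do not expect a genuine obstacle here; the corollary is the infinite-domain shadow of Corollary~\ref{thm: not_full_causal_function}, and the only thing to watch is the bookkeeping that keeps all modifications strictly inside $\mathbf{X}\setminus D$, so that exactness on $D$ is preserved. The case split $\max(|X|,\sup |p(v)|)=\infty$ plays no role in the argument beyond ensuring $|\mathbf{X}|=\infty$, so no separate treatment of the sub-cases ($|X|=\infty$ versus $\sup |p(v)|=\infty$) is needed.
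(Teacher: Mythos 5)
Your proposal is correct as a proof of the literal statement, but it takes a genuinely different route from the paper. The paper does not define $\hat{f}$ by cases; it picks $m+1$ distinct points in $\mathbf{X}\setminus D$ (which is infinite since $|\mathbf{X}|=\infty$ and $|D|<\infty$, exactly as you observe), assigns each of them an erroneous label $v_i^{err}\in f(\mathbf{X})$ with $v_i^{err}\neq f(s(v_i))$, forms the augmented finite dataset $D^{err}$, and then invokes Lemma~\ref{lem: interpolation} to obtain a \emph{continuous} function $\hat{f}$ that interpolates $D^{err}$ exactly. So the paper's disagreement set has size exactly $m+1$, and the finitary alternative you sketch at the end is the closest match to its construction, except that you still define $\hat{f}=f$ off the perturbed points rather than interpolate. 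The trade-off is this: your set-theoretic definition is more elementary, needs no metric structure, and yields an infinite disagreement set handling all $m$ at once; but the function you build is an arbitrary map with no regularity, whereas the paper's detour through the interpolation lemma produces a continuous $\hat{f}$, which is what lets the paper argue (via the universal approximation theorem, as discussed at the start of its appendix) that such a ``bad'' function is realizable by neural networks, i.e., that the impossibility result bites for actual hypothesis classes rather than merely for the class of all functions. Since the corollary as stated imposes no continuity requirement, your proof is valid; it just delivers a weaker object than the paper's proof does, and in the paper's broader program that continuity is not incidental.
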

The causal function is guaranteed to be well-learned only when $|\mathbf{X}| < \infty$.

We now show that a DAG can only be solved \textbf{recursively}. 
\vspace{+2mm}
\begin{theorem}
For $|X| < \infty$ and $\sup |p(v)| < \infty$, if $D = X^{\sup |p(v)|}$, then there exists an approximation function $\Hat{f}: X^{\sup |p(v)|} \rightarrow X$, the DAG can be recursively solved, i.e., $\forall\,G = (V, E)$, $G_{\Hat{f}} = G_f$. 
\label{thm: recursive_dag_solved}
\end{theorem}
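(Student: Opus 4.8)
The plan is to prove Theorem~\ref{thm: recursive_dag_solved} by combining Theorem~\ref{thm: full_causal_function} with an induction along a topological ordering of each DAG. The key observation is that $G_{\Hat f} = G_f$ is a statement about every vertex value coinciding under the two causal functions, so it suffices to show that $\Hat f$ and $f$ agree on all the inputs that actually arise when instantiating an arbitrary $G$. First I would invoke Theorem~\ref{thm: full_causal_function}: since $D = X^{\sup|p(v)|} = \mathbf{X}$ and $|\mathbf{X}| < \infty$, there exists $\Hat f\colon X^{\sup|p(v)|}\to X$ with $\Hat f(p(v)) = f(p(v))$ for \emph{all} $p(v)\in\mathbf{X}$. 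This is the crucial leverage: because the approximation agrees on the entire input space, not merely on a proper subset, we never encounter an out-of-distribution argument while recursing.

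Next I would fix an arbitrary DAG $G = (V,E)$ with topological sorting $V = \{v_1,\dots,v_{|G|}\}$, and write $(v_1,\dots,v_{|G|})$ for the values induced by $f$ and $(\hat v_1,\dots,\hat v_{|G|})$ for those induced by $\Hat f$, where each is defined by $v_n = f(p(v_n))$ and $\hat v_n = \Hat f(\hat p(v_n))$ respectively, with $\hat p(v_n)$ denoting the tuple of $\Hat f$-values of the preceding vertices. The proof then proceeds by strong induction on the topological index $n$. For the base case, pure input vertices ($|p(v)| = 0$) are assigned fixed given values identical in both instantiations, so $\hat v = v$ there. For the inductive step, assume $\hat v_i = v_i$ for all $i < n$; since $G$ is a DAG and the ordering is topological, every $u \in p(v_n)$ has index strictly less than $n$, hence $\hat p(v_n) = p(v_n)$ as tuples by the inductive hypothesis. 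Applying the global agreement of $\Hat f$ with $f$ gives $\hat v_n = \Hat f(p(v_n)) = f(p(v_n)) = v_n$, closing the induction. Since $G$ was arbitrary, $G_{\Hat f} = G_f$ for every $G$.

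The main obstacle I anticipate is a dimensional bookkeeping issue rather than a conceptual one: the causal function $f\colon X^{\sup|p(v)|}\to X$ has a fixed arity $\sup|p(v)|$, yet different vertices have different in-degrees $|p(v)|$, so the tuple $p(v)$ must be embedded into the fixed-dimensional input space $\mathbf{X}$ in a consistent way (e.g.\ by padding to the supremal arity). I would need to make sure that whatever convention is used to form the argument of $f$ from $p(v)$ is identical for $\Hat f$, so that the agreement $\Hat f = f$ on $\mathbf{X}$ transfers verbatim; because both instantiations use the same structural map $p(\cdot)$ determined solely by $G$ and only differ in the learned function, this matching is automatic once the inductive hypothesis equates the preceding values. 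A secondary point worth stating is that the theorem quantifies over \emph{all} $G$ including those larger than any training instance, and the argument above uses $G$ only through its topology and the values on preceding vertices — it never references $|G|$ — which is precisely why length generalization holds: correctness propagates through arbitrarily deep recursion so long as $\Hat f$ matches $f$ pointwise on the finite $\mathbf{X}$.
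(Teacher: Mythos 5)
Your proposal is correct and follows essentially the same route as the paper's proof: invoke Theorem~\ref{thm: full_causal_function} to obtain $\Hat{f}$ agreeing with $f$ on the entire finite input space $\mathbf{X}$, then induct along a topological sorting of an arbitrary $G$, using the inductive hypothesis to equate predecessor values so that the global agreement of $\Hat{f}$ with $f$ propagates to each new vertex. Your write-up is in fact somewhat more careful than the paper's (explicit base case for pure input vertices, and attention to the arity/padding convention for embedding $p(v)$ into $X^{\sup|p(v)|}$), but the decomposition and key lemma are identical.
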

It shows that when the DAG structure of the problem is given and the causal function $f$ is well-learned, a problem of any length or size can be solved, which means achieving LG. See \textit{Appendix}~\ref{sec.recursive.direct} for the proof.

However, in practice, both the DAG and the causal function are unknown. Below, we deal with this realistic scenario. 

\subsection{Given Only the Unstructured Sequence}
\label{sec:cot_and_others}

We now study the realistic scenario where the DAG structure of the reasoning problem is unknown. We are given only the input sequence of the problem in a CoT representation. We discuss the possibility of learning to transform the input sequence of unstructured elements into a structured DAG. We achieve this recursively by learning to find input elements that should be reasoned next, like recursively doing topological sorting of the underlying DAG (we do not physically construct the DAG). For instance, $(a_0 + a_1) \times (a_2 - (a_3 / a_4))$ is a sequence of elements (including `(' and `)') and the ordering of calculation needs to be learned. 
Let us use this example to discuss more about CoT
and how it connects to DAG and recursive reasoning on DAG. The process of CoT (Fig.~\ref{fig:example_dag}) is 
$(a_0 + a_1) \times (a_2 - (a_3 / a_4)) 
= b_1 \times (a_2 - b_2)
= b_1 \times b_3 
= b_4$.

CoT does three things or has three sub-steps in each reasoning/causal step. Sub-step 1 decides a part of the topological order of the underlying DAG, i.e., finding the vertices that will be used in the next reasoning step, i.e., vertices in $\{p(v) |\,v \in W^0\}$ ($W^0$ are vertices to be valued in the next reasoning step, defined later), before the reasoning step.
Sub-step 2 applies the causal function (previously learned), e.g., $b_1 = f(a_0, +, a_1)$, $b_2 = f(a_3, /, a_4)$, etc. 
Sub-step 3 puts the result back into the unstructured sequence. Since the arithmetic example decreases the number of elements at each step, sub-step 3 looks natural. However, it's possible to have a reasoning problem whose underlying DAG induces multiple vertices with multiple elements after one reasoning step. The representation by the unstructured data of the result after one reasoning step shouldn't be ambiguous; otherwise, the reasoning problem itself is not well-defined. 

To define a well-defined reasoning problem, we need more notations. For a DAG $G = (V, E)$, we have defined $p(v) = \{u \in V |\, \exists\, e \in E, u \overset{e}{\rightarrow} v\}$ to be all vertices that can reach $v$ directly ($v$'s in-neighbors). Now we define $t(v) = \{w \in V |\, \exists\, e \in E, v \overset{e}{\rightarrow} w\}$ to be all vertices that can be reached from $v$ directly ($v$'s out-neighbors). We define $(p \circ t)(v) = \{v' \in V |\, \exists\, w \in V,\, \exists\, e_1, e_2 \in E, v \overset{e_1}{\rightarrow} w, v' \overset{e_2}{\rightarrow} w \}$ to be the vertices that can reach any causal vertex in $t(v)$.

Now we define what a well-defined reasoning problem is in an unstructured sequence. Denote the initial unstructured data to be a sequence of elements $s^0 = s_1^0 s_2^0 \dots s_{i_0}^0$. Let $g$ be the function that transforms the unstructured data into a structured DAG. We write $G^0 = (V^0, E^0) = g(s^0)$ (see Fig.~\ref{fig:example_dag}). We say a vertex is \textit{valued} if a value is assigned to the vertex. In $G^0$, all vertices with $|p(v)| = 0$ have been valued as they are pure input vertices, but all causal vertices with $|p(v)| > 0$ haven't been valued. After one reasoning step on $G^0$, the vertices $W^0 = \{v \in G^0|\, p(v) \subseteq \{v \in G^0|\, |p(v)| = 0\}\}$ are valued. Then the vertices $U^0 = \{v \in \bigcup_{v \in W^0} p(v)|\, t(v) \subseteq W^0 \}$, whose causal vertices are valued, become useless in the following reasoning steps. We denote $G^1 = G^0 \setminus U^0$ to be the subgraph of $G^0$ after removing all vertices in $U^0$ and all edges from $U^0$. To represent $G^1$, let $g^{-1}$ be the inverse function that maps a DAG back to the unstructured data (which may not be unique) (Fig.~\ref{fig:example_dag}). 
We write $s^1 = s_1^1 s_2^1 \dots s_{i_1}^1 = g^{-1} (G^1)$. 

We say the reasoning problem is \textit{well-defined} with the unstructured data if $g(g^{-1} (G)) = G,\,\forall\, G$. Note that $g(g^{-1} (G)) = G$ represents that the DAG is isomorphic after the composing transformation $(g \circ g^{-1})$. 

Now we describe the three sub-steps of CoT again. Sub-step 1 finds the input vertices $\{p(v) |\,v \in W^0\}$ and the next valued vertices $W^0 = \{v \in G^0|\, p(v) \subseteq \{v \in G^0|\, |p(v)| = 0\}\}$.
Since CoT only needs a one-step-forward subgraph instead of the entire graph. It's unnecessary to apply the original $g$. Instead, we define 
\vspace{-1mm}
\begin{align}
\Tilde{g}(s^0) \overset{def}{=} \{p(v) |\, v \in g(s^0), p(v) \subseteq \{v \in g(s^0)|\, |p(v)| = 0\}\},
\label{eq: input_elements}
\end{align}
which finds the combinations of input vertices that infer causal vertices for one causal step. 
Different from $g$ which constructs the entire graph, $\Tilde{g}$ only needs to construct the graph for one causal/reasoning step. 

Sub-step 2 applies the causal function $f$ to infer 
\vspace{-1mm}
\begin{equation}
f(\Tilde{g}(s^0)) \overset{def}{=} \{f(p(v)) | \ p(v) \in \Tilde{g}(s^0)\}.
\end{equation}
Sub-step 3 puts the resulting one-step-forward subgraph into the unstructured data. 
Note that $s_1^0,\dots, s_{i_0}^0$ are vertices of $g(s^0)$, where $v \in s^0$ means $v$ is a vertex of $g(s^0)$ that is valued by some $s_j^0 \in s^0$. 
Since the vertices of $s^0$ and the one-step-forward vertices $f(\Tilde{g}(s^0))$ construct a subgraph of $g(s^0)$, and $g^{-1}$ is well-defined, we simply map the subgraph of vertices $(s^0, f(\Tilde{g}(s^0)))$ into the unstructured data by 
\begin{equation}
s^1 = g^{-1} (f(\Tilde{g}(s^0)), s^0).
\end{equation}
Sub-step 3 can be complex. For instance, there may exist input vertices that infer multiple causal vertices by $f$, i.e. $|f(p(v))| > 1$, then the question is how to represent $f(p(v))$ by the unstructured data. It's also a question of where to put $f(p(v))$ back in $s^0$ when $p(v)$ is not a successive sub-sequence of $s^0$. The other question is how to deal with the useless vertices $U^0 = \{v \in \bigcup_{v \in W^0} p(v)|\, t(v) \subseteq W^0 \}$. 

In this work, we define $s^1 = g^{-1} (f(\Tilde{g}(s^0)), s^0)$ by specific rules of the reasoning problem. 
But it's unclear if sub-step 3 can be learned in general. We leave this to our future work. 

In Sec. \ref{sec: dag}, we discussed the causal function, which is used in sub-step 2 of CoT. Now we discuss the possibility of learning sub-step 1. Denote 
\vspace{-1mm}
\begin{equation}
D \subseteq \{(\Tilde{g}(s), s)|\, s = g^{-1}(G)\}
\end{equation}
to be the dataset of $\Tilde{g}$. $G$ represents the underlying structure of the reasoning problem, and $s = g^{-1}(G)$ is the corresponding unstructured or sequence data of $G$. 
When $v_i = s_i$ and $v_j = s_j$, denote
\begin{equation}
d(v_i, v_j; s) \overset{def}{=} |i - j|,
\end{equation} 
which defines the distance between two vertices that were originally in $s^0$ to be their index distance in $s^0$. 

We now introduce the important notion of $R$, the \textit{maximal input element distance for a reasoning step}, which decides whether \textbf{LG} can be achieved. 
Define 
\begin{equation}
R \overset{def}{=} \sup_{s} \sup_{v \in s} \max_{v' \in (p \circ t)(v)} d(v', v; s). 
\label{eq: max_input_ele_dist}
\end{equation}
The $\sup$ is for any element $v$ in the unstructured sequence of the problem, the $\max_{v' \in (p \circ t)(v)}$ is the maximal distance between any two elements that should be involved in the next calculation/reasoning/causal step. 
$R$ is the maximal distance between the elements that should be calculated next in the same calculation step of the problem.

\vspace{+2mm}
\begin{theorem}
\label{thm: find_input_vertices_is_possible}
For $R < \infty$, if $D = X^{4R +1}$,
then there exists an approximation function $\Hat{g}: X^{4R + 1} \rightarrow \{0, 1\}$ 
s.t. $\Hat{g} (s')|_{s'_c} = \Tilde{g} (s)|_{s'_c}$, where $s' \subseteq s$ is a $4R+1$ sub-interval of $s$, and $s'_c$ is the central element of $s'$. 
\end{theorem}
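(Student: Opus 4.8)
The plan is to decompose the statement into a combinatorial \emph{locality lemma} and a \emph{finite-table} construction. The locality lemma will say that the label $\Tilde{g}(s)|_{s'_c}$ of the central element is determined by the $4R+1$ tokens surrounding it; the finite-table step will then convert this local dependence into the existence of $\Hat{g}$. I would first unfold the definition in~\eqref{eq: input_elements}: $\Tilde{g}(s)|_{s'_c}=1$ precisely when $s'_c\in p(w)$ for some causal vertex $w\in g(s)$ whose parent set $p(w)$ consists entirely of pure inputs (vertices with $|p(v)|=0$). Thus the whole problem reduces to deciding, from the windowed subsequence alone, whether such a witness $w$ exists.

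\textbf{Locality (the key step).} I would prove that a radius of $2R$ suffices, which is exactly why the window has $4R+1=2(2R)+1$ tokens, and I would do this in two stages. In stage one, any candidate child $w$ with $s'_c\in p(w)$ has every element of $p(w)$ co-parenting $w$ together with $s'_c$, so $p(w)\subseteq(p\circ t)(s'_c)$; by the definition of $R$ in~\eqref{eq: max_input_ele_dist}, each such element $v'$ satisfies $d(v',s'_c;s)\le R$, so $p(w)$ sits inside the index interval $[c-R,c+R]$. In stage two, to certify that a co-parent $u\in p(w)$ is a pure input I must inspect $u$'s own constituent structure (the tokens that would witness $|p(u)|>0$); applying the $R$-bound a second time with $u$ in place of $s'_c$ confines this to $[c-2R,c+2R]$, i.e. to the $4R+1$ tokens of $s'$. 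Hence $\Tilde{g}(s)|_{s'_c}$ is a well-defined function of $s'$ only: any two global sequences that agree on the length-$(4R+1)$ window about their respective centers must receive the same label. Positions within $2R$ of either end of $s$ are handled by padding $s$ with a reserved null symbol (which induces no edges), so that every center admits a full window.

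\textbf{From locality to $\Hat{g}$.} Once locality holds, the target is a fixed map $X^{4R+1}\to\{0,1\}$ on a \emph{finite} domain (finite since $|X|<\infty$ gives $|X^{4R+1}|<\infty$). The hypothesis $D=X^{4R+1}$ states that every window already appears in training, which is exactly the full-coverage regime $D=\mathbf{X}$ underlying Theorem~\ref{thm: full_causal_function}; the same finite-domain argument, now with input space $X^{4R+1}$ and range $\{0,1\}$, yields $\Hat{g}:X^{4R+1}\to\{0,1\}$ agreeing with the target on all of $X^{4R+1}$, i.e. $\Hat{g}(s')|_{s'_c}=\Tilde{g}(s)|_{s'_c}$ for every length-$(4R+1)$ sub-interval $s'\subseteq s$, as required.

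\textbf{Main obstacle.} I expect the locality lemma to carry the real weight, and in particular the assertion that the reconstruction map $g$ is itself local: that the pure-input, parent, and co-parent relations bearing on position $c$ are fixed by the windowed subsequence and cannot be disturbed by distant tokens. Making this precise means unwinding~\eqref{eq: max_input_ele_dist} to verify that \emph{every} edge relevant to the central label joins two positions inside the window, and it requires care with causal vertices $w$ that need not occupy a genuine position in the current sequence, so that $d(\cdot,\cdot;s)$ is only ever evaluated on actual sequence elements. By comparison, the finite-table step is immediate once locality and $D=X^{4R+1}$ are available.
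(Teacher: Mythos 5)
Your decomposition (locality of the window label, then a finite-table construction) is the right shape, and your finite-table step is in fact the paper's \emph{entire} proof of this theorem: the paper simply applies Lemma~\ref{lem: interpolation} with $X^{4R+1}$ as the domain and $\{0,1\}$ as the range, using $D = X^{4R+1}$ to interpolate the binary window labels everywhere. What the paper does \emph{not} do is prove your locality lemma inside this proof. The well-definedness of the label (that any two sequences agreeing on a $(4R+1)$-window assign the same label to its center) is deferred to Theorem~\ref{thm: (1,4R+1)-consist}, which carries an \emph{additional hypothesis} --- each element belongs to at most one reasoning step --- and is proven by contradiction (a label discrepancy would force some element $w_0$ of the shared window to lie in two different reasoning steps), not by a direct double application of the $R$-bound.

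The genuine gap is your stage two, exactly the point you flag as the ``main obstacle.'' In this formalization the pure-input vertices of $g(s)$ are precisely the sequence elements, so ``certifying that a co-parent $u \in p(w)$ is a pure input'' is the question of whether $u$ occupies a position of $s$ at all. When $u$ is a sequence element the certification is vacuous; in the problematic case $u$ is a causal vertex --- e.g.\ $b_2$ in $(a_0+a_1)\times(a_2-(a_3/a_4))$, which is a co-parent of $a_2$ via $b_3$ --- and then $u$ has no index in $s$, $d(\cdot,u;s)$ is undefined, and ``applying the $R$-bound with $u$ in place of $s'_c$'' does not parse. Moreover, the definition of $R$ in Eq.~\eqref{eq: max_input_ele_dist} constrains only distances between co-parents that are both sequence elements; the parents of $u$ (here $a_3$, $/$, $a_4$) are \emph{not} co-parents of $s'_c$, so nothing in $R < \infty$ alone bounds where in $s$ the tokens witnessing $|p(u)|>0$ lie. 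So the claim that the central label is a function of the $4R+1$ window cannot be derived this way; it is precisely to get around this that the paper imports the extra uniqueness hypothesis and switches to a contradiction argument. To repair your proof you would need to either adopt that hypothesis and argument (i.e.\ re-prove Theorem~\ref{thm: (1,4R+1)-consist}) or add an assumption that the tokens witnessing causal co-parents lie within the window; as written, stage two fails, and with it the locality lemma on which your finite-table step depends.
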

The proof is given in \textit{Appendix}~\ref{sec.mied}. 

For any problem with $G$ as the underlying DAG of the problem ($G$ is not accessible), $\forall\, v \in g^{-1}(G),\,\forall \, v' \in (p \circ t)(v)$, $d(v', v)$ is uniformly bounded by a constant $R$. $R = \sup \max_{v' \in (p \circ t)(v)} d(v', v)$ is a primitive property of the reasoning DAG $G$ and function $g^{-1}$ transforms $G$ into the unstructured data.
For instance, if $s^0 = (a_0 + a_1) \times (a_2 - (a_3 / a_4))$, we have $\Tilde{g}(s^0) = \{(a_0 + a_1),\,(a_3 / a_4)\}$. It's easy to see $R = 4$ for arithmetic problems with `(' and `)'. 

Theorem \ref{thm: find_input_vertices_is_possible} requires $D = X^{4R + 1}$ because the combinations within a $(4R+1)$-length window {cover all cases that enable a correct order to be learned}. 
For instance, in `$\dots e \times a + b + c \times d \dots$', it's obvious that $R = 2$ because each operation or reasoning step involves 3 elements, e.g., $e \times a$, $a + b$, $b + c$, $c \times d$. To decide whether $b$ should be calculated first, we look at neighbors of $b$ in radius $R$, which is a window of length $2R + 1$. In this window $a + b + c$, we know $a + b$ should be calculated first. However, this is not enough because we also need to consider neighbors of $a$ in radius $R$ to validate whether $a + b$ should be calculated first. {In the window $e \times a + b$, we know $e \times a$ should be calculated first. Therefore, in this $(4R+1)$-length window $e \times a + b + c \times d$, we know $b$ in the center shouldn't be calculated first.} A $(4R+1)$-length window guarantees correctness because of the consistency, which is shown below in Theorem \ref{thm: (1,4R+1)-consist}.
$D = X^{M}$ ($M < 4R+1$) may not guarantee correctness. An example is given in \textit{Appendix} \ref{sec.mied}.

When $R < \infty$, Theorem \ref{thm: find_input_vertices_is_possible} holds, where a learned function $\Hat{g}$ predicts which elements in a $(4R+1)$-length window should be reasoned next. When $|\mathbf{X}| < \infty$, Theorem \ref{thm: full_causal_function} 
holds, where a learned causal function $\Hat{f}$ takes the predicted elements of $\Hat{g}$ as input to predict the value of the next/causal vertex. By Theorem \ref{thm: recursive_dag_solved}, the recursive process solves the problem of arbitrary length, i.e., \textbf{LG}.


\subsection{Dealing with $R = \infty$}
\label{sec: (n, r)-consistent}

We have shown that LG is solvable for problems with $R < \infty$. However, many reasoning problems have $R = \infty$. 
Our example above illustrated that for $R < \infty$, an interval of length $4R + 1$ is sufficient for the learner to learn to determine whether the central element (e.g., $b$ above) of the interval is an element in the next reasoning step or not. We can extend the idea to $R = \infty$ by considering the existence of $n$ intervals that can determine whether an element should be involved in the next reasoning step. A more general LG condition, called \textbf{(\textit{n}, \textit{r})-\textit{consistency}}, is proposed. The intuition is that for any set of $n$ $r$-length contiguous sub-sequences (or intervals) of elements when it appears in any \textbf{instance} (e.g., $23+345$) of a \textbf{problem class} (e.g., \textit{addition}), the central element of each of the $n$ sub-sequences must be either in the next reasoning step or not in the next reasoning step among all possible instances of the problem class. This ensures that there is no ambiguity in learning to find the elements involved in the next reasoning step and the resulting thus can be applied to any instance of the problem at test time. We also show that $R < \infty$ is $(1, 4R+1)$-consistent, i.e., $R < \infty$ is a special case of $(n,r)$-consistency. For simplicity, we use a `\textit{problem}' to refer to a `\textit{problem class}' below.
\begin{definition}
\label{def: (n,r)-consistent}
A reasoning problem is $(n, r)$\textit{-consistent} for integers $n$ and $r$, if, \textbf{(i)} for any set $S$ of $n$ $r$-length sequences of elements appearing in the CoT formulation of an instance of the problem as sub-sequences, whether or not the central element of each sub-sequence belongs to the set of elements to be calculated next is always \textit{the same} for any problem instance, and \textbf{(ii)} for any problem instance, there always exist $n$ $r$-length contiguous sub-sequences that cover all input elements of a reasoning step. Formally, 

\textbf{(i)} Let $I_j$ be a sequence in the $n$ $r$-length sequences, i.e., $I_j \in S$. For $\forall\, \{I_1, \dots, I_n |\, I_j = s_{j_1} \dots s_{j_r},\, j = 1,\dots,n \}$, 
$\forall\, s^0, s^1$ (CoT steps of any problem instances) that contains contiguous sub-sequences $\{I_1, \dots, I_n\}$, i.e. $\forall\, s^0, s^1 \in \{s | s = s_1 s_2 \dots s_{i_0},\,\exists\, i_j\ s.t.\ s_{i_j} \dots s_{i_j + r - 1} = I_j, j = 1, \dots, n\}$, we always have 
\begin{equation}
\Tilde{g}(s^0)|_{\{s_{j_{c}} |\, j = 1, \dots, n\}} = \Tilde{g}(s^1)|_{\{s_{j_{c}} |\, j = 1, \dots, n\}},
\label{eq: (n,r)-consistent}
\end{equation} 
where $s_{j_{c}} = s_{j_{\lfloor\frac{r + 1}{2}\rfloor}}$ is the central element of $I_j$, and $\Tilde{g}(s^i)|_{K} \overset{def}{=} \{\{v' \in p(v) \cap K\} | p(v) \in \Tilde{g}(s^i)\}$ 
are the elements for the next reasoning step in set $K$.
Note that to ensure that every element can be a central element of a $r$-length sub-sequence, we can add \textbf{empty elements} at the beginning and the end of a problem instance. 

\textbf{(ii)} Let $s^0 = s^0_1 \dots s^0_{i_0}$ be a CoT step of an instance of the problem. There exists $p(v_0) \in \Tilde{g}(s^0)$, $n$ $r$-length contiguous sub-sequences $I_j = s^0_{i_j} \dots s^0_{i_j + r - 1},\, j = 1, \dots, n$, s.t. $p(v_0) \subseteq \bigcup_{j=1}^n I_j$. 

\end{definition}
\vspace{+2mm}


$(n, r)$-consistency is an \textit{essential property} of a CoT formulation of a reasoning problem. It reflects the capacity to identify the elements to be calculated next from the finite contiguous sub-sequences (or \textbf{intervals}) and to apply the causal function on the input elements concurrently. Let's use the \textit{addition-[1-line]} problem (the \textit{addition} problem formulated in CoT in one line)  as an example (see its detailed CoT process in Table \ref{tab:examples} of \textit{Appendix}~\ref{appendix.examples}). We check whether the problem is $(3, 3)$-consistent or not. 

\textbf{(1)} For a problem instance's CoT step of `$123+567=~c0$', $\{\text{`}2\text{'}, \text{`}6\text{'}, \text{`}~c\text{'}\}$ (notice the empty element before $c$) are the elements to be calculated next ($?$ represents $0$ is carried and $c$ means $1$ is carried) and is covered by $\{I_1, I_2, I_3\} = \{\text{`}123\text{'}, \text{`}567\text{'}, \text{`}~c0\text{'}\}$. 
It's obvious that for any problem instance, there exists $3$ intervals of length $3$ covering all the elements to be calculated next. 
For $(3, 3)$, \textit{addition-[1-line]} satisfies Def~\ref{def: (n,r)-consistent} (ii). 
Note that Def~\ref{def: (n,r)-consistent} (ii) for $(n, r)$-consistent is not satisfied for $n < 3$ because $1$ or $2$ interval(s) cannot cover all the elements to be calculated next. 

\textbf{(2)} Let's check the central elements of $\{I_1, I_2, I_3\} = \{\text{`}123\text{'}, \text{`}567\text{'}, \text{`}~c0\text{'}\}$.
For the CoT step `$123+567=~c0$', the central elements of $\{I_1, I_2, I_3\}$ are elements to be calculated {\color{black}next.} 
However, for a CoT step of another problem instance $12342+45678=~c0$, $\{\text{`}4\text{'}, \text{`}7\text{'}, \text{`}~c\text{'}\}$ are the elements to be calculated next, but the central elements of $\{I_1, I_2\}$ (i.e., \{\text{`}123\text{'}, \text{`}567\text{'}\}) are not. So for $\{I_1, I_2, I_3\}$, these two CoT steps have different $\Tilde{g}(s^0)|_{\{s_{j_{c}} |\, j = 1, 2, 3\}}$, i.e., $(3, 3)$ doesn't satisfy Def~\ref{def: (n,r)-consistent} (i). 
Thus, \textit{addition-[1-line]} is not $(3, 3)$-consistent. 
Similarly, the \textit{addition-[1-line]} problem is not $(n,r)$-consistent ($n \geq 3$) because when '$?$' or `$c$' is the central element of a $r$-length sub-sequence (or interval), the central elements of the other $n-1$ sub-sequences may or may not be elements used in the next reasoning step for different problem instances. Thus, \textit{addition-[1-line]} is not $(n,r)$-consistent for any $n$ or $r$. 

However, we can formulate addition into a $(3, 3)$-consistent problem with 2 lines, i.e. \textit{addition-[2-line]} (see the CoT process in Table \ref{tab:examples} of \textit{Appendix}~\ref{appendix.examples}). The two CoT examples above become, 
$$
\begin{aligned}
123+567=~c0 &\Rightarrow \left(\begin{aligned}
&123 + 567 = ~~c0 \\
&\ \ I\ \ \ \ \ \  i\ \ \ \ \ \ \ \ \,J \\
\end{aligned}\right), \\
12342 + 45678 =~c0 &\Rightarrow \left(\begin{aligned}
&12342 + 45678 = ~~c0 \\
&~~~~~I\ \ \ \ \ \  \ \ \ i\ \ \ \ \ \ \ J \\
\end{aligned}\right), \\
\end{aligned}
$$
where $I$ and $i$ indicate the digits to be added next and $J$ indicates the position of the output. In this case, each element has 2 dimensions and when the second dimension is not $I$, $i$, or $J$, it is an empty token, which is also significant. $(3, 3)$ satisfies both (i) and (ii) of Def~\ref{def: (n,r)-consistent} due to adding the position indicators as the second dimension of each element.  Thus, \textit{addition-[2-line]} is $(3, 3)$-consistent. 

The $(n, r)$-consistency is not unique, which is shown in the following Theorem.  
\vspace{+2mm}
\begin{theorem}
If a problem is $(n, r)$-consistent, it's also $(n, r')$-consistent, $\forall\, r' \geq r$. 
\label{thm:  n,r_implies_n,r'}
\end{theorem}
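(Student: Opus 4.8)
The plan is to unwind the two clauses of Definition~\ref{def: (n,r)-consistent} separately and show each is inherited when the window length grows from $r$ to any $r' \ge r$. Throughout, the guiding idea is that a longer interval carries strictly more information than the shorter one it contains, so nothing that was determined by the $r$-length windows can become ambiguous for the $r'$-length windows.

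First I would dispose of clause (ii), the covering condition, which is the easier half. Given an instance with CoT step $s^0$, the hypothesis of $(n,r)$-consistency supplies $n$ contiguous $r$-length sub-sequences $I_1, \dots, I_n$ whose union contains all input elements $p(v_0)$ of a reasoning step. I would extend each $I_j$ to a contiguous $r'$-length sub-sequence $I'_j \supseteq I_j$ by appending the required $r'-r$ neighbouring symbols; where $s^0$ runs out of symbols on one side, the empty-element padding introduced in Definition~\ref{def: (n,r)-consistent} guarantees that enough positions are available. Since $I_j \subseteq I'_j$, the union $\bigcup_j I'_j$ still contains $\bigcup_j I_j \supseteq p(v_0)$, so clause (ii) holds for $(n, r')$.

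The core of the argument is clause (i). Here I would start from an arbitrary family of $n$ $r'$-length sequences $I'_1, \dots, I'_n$ and, from each, carve out a centred $r$-length sub-sequence $I_j$ whose central element coincides with that of $I'_j$. Writing the central index of an $\ell$-length window as $c_\ell = \lfloor (\ell+1)/2 \rfloor$, I would place the $r$-window inside the $r'$-window starting at offset $d = c_{r'} - c_r + 1$, so that its centre lands exactly on position $c_{r'}$ of the outer window. The two index checks $1 \le d$ and $d + r - 1 \le r'$ reduce to $c_{r'} \ge c_r$ and $\lceil r'/2 \rceil + \lfloor r/2 \rfloor \le r'$, both of which follow from $r \le r'$ together with $\lfloor r/2 \rfloor \le \lfloor r'/2 \rfloor$. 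With the centres aligned, any CoT steps $s^0, s^1$ that contain $I'_1, \dots, I'_n$ as contiguous sub-sequences automatically contain the shorter $I_1, \dots, I_n$ as contiguous sub-sequences, so $(n,r)$-consistency (i) yields
\begin{equation*}
\Tilde{g}(s^0)|_{\{s_{j_{c}}\,|\,j=1,\dots,n\}} = \Tilde{g}(s^1)|_{\{s_{j_{c}}\,|\,j=1,\dots,n\}},
\end{equation*}
and because $s_{j_{c}}$ is the common central element of $I_j$ and $I'_j$, this is precisely the invariance required of $(n,r')$-consistency for the family $I'_1, \dots, I'_n$.

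I expect the only delicate point to be the centring step in clause (i): the floor in $c_\ell = \lfloor (\ell+1)/2 \rfloor$ means the centre sits left of the geometric middle for even lengths, so the offset bookkeeping must be verified rather than waved through, and the parity mismatch between $r$ and $r'$ is exactly what the two inequalities above are designed to absorb. Everything else — the covering extension and the containment of the carved windows — is routine once the padding convention is invoked.
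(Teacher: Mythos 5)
Your proposal is correct and follows essentially the same route as the paper's proof: clause (ii) by extending/containing the covering intervals, and clause (i) by carving a centred $r$-length window out of each arbitrary $r'$-length window so that central elements coincide, then invoking $(n,r)$-consistency. The only cosmetic difference is that the paper handles the centre alignment by an explicit three-way parity case split, whereas you use the single offset formula $d = c_{r'} - c_r + 1$ and verify the two index inequalities — the same bookkeeping in a more compact form.
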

The proof is given in \textit{Appendix}~\ref{sec.(n,r)-consistency}. When a problem satisfies $R < \infty$, Theorem \ref{thm: (1,4R+1)-consist} shows that it is $(1, 4R+1)$-consistent. Thus, $R < \infty$ is a special case of $(n,r)$-consistent. 
\vspace{+2mm}
\begin{theorem}
If i) $R < \infty$ (defined in Eq.~\eqref{eq: max_input_ele_dist}) and ii) each element belongs to at most one reasoning step, the problem is $(1, 4R+1)$-consistent. 
\label{thm: (1,4R+1)-consist}
\end{theorem}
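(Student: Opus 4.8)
The plan is to verify the two requirements of Definition~\ref{def: (n,r)-consistent} with $n=1$ and $r=4R+1$. Write $I_1$ for the single interval, $v_c$ for its central element, and $W$ for the set of its $4R+1$ elements, so that $W$ reaches $2R$ positions to either side of $v_c$ (the boundary being padded with empty elements as already provided in the definition). I would dispose of requirement (ii) first, as it is immediate: for any CoT step $s^0$ that still admits a reasoning step, $\Tilde{g}(s^0)\neq\emptyset$, so pick any $p(v_0)\in\Tilde{g}(s^0)$; for any two $u,u'\in p(v_0)$ we have $u'\in(p\circ t)(u)$, since they share the common causal vertex $v_0$, hence $d(u',u;s^0)\leq R$ by Eq.~\eqref{eq: max_input_ele_dist}. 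Thus $p(v_0)$ occupies at most $R+1\leq 4R+1$ consecutive positions and is covered by a single $(4R+1)$-interval.

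Requirement (i) is the heart of the matter. I would first restate combinatorially what it means for $v_c$ to lie in the next reasoning step. By the definition of $\Tilde{g}$ in Eq.~\eqref{eq: input_elements}, $\{v_c\}\in\Tilde{g}(s^0)|_{\{v_c\}}$ holds exactly when $v_c$ is an immediate in-neighbor of some causal vertex $w$ all of whose in-neighbors $p(w)$ are already valued, i.e., are pure-input vertices ($|p(\cdot)|=0$) in $G^0$. Assumption (ii) of the theorem, that each element belongs to at most one reasoning step, says $|t(v_c)|\leq 1$, so this $w$ is unique when it exists and $(p\circ t)(v_c)=p(w)$. Hence the predicate reduces to: \emph{$(p\circ t)(v_c)\neq\emptyset$ and every vertex of $(p\circ t)(v_c)$ is valued}.

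I would then show that this predicate is a function of the window $W$ alone, which forces the two sides of Eq.~\eqref{eq: (n,r)-consistent} (with $K=\{v_c\}$) to agree for any $s^0,s^1$ sharing $I_1$. Two distance bounds drive this. First, every $u\in(p\circ t)(v_c)$ satisfies $d(u,v_c;s^0)\leq R$, so the whole in-neighbor set $p(w)$ sits within radius $R$ of $v_c$ and lies inside $W$. Second, deciding whether a given $u\in p(w)$ is valued amounts to deciding whether $u$ has any in-neighbor, and the in-neighbors of $u$ form a single reasoning step whose elements span at most $R$ positions and contain $u$ (the causal vertex occupies a position inside the span of its inputs, by the put-back convention of sub-step~3); since $u$ already lies within radius $R$ of $v_c$, these in-neighbors lie within radius $2R$ of $v_c$ and hence inside $W$. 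Therefore both the identity of $p(w)$ and the valued/unvalued status of each of its members are determined by the $4R+1$ elements of $W$; as $s^0$ and $s^1$ present identical elements on $I_1=W$, they yield the same restricted value, establishing Eq.~\eqref{eq: (n,r)-consistent}. Combined with requirement (ii), this gives $(1,4R+1)$-consistency.

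The step I expect to be the main obstacle is making ``determined by $W$ alone'' rigorous: one must argue that the in-neighbor relations among the vertices indexed inside $W$ do not depend on the surrounding context that may differ between $s^0$ and $s^1$. This is exactly where $R<\infty$ and $|t(v_c)|\leq 1$ are used --- $R<\infty$ guarantees that no edge relevant to $v_c$, to $p(w)$, or to the in-neighbors of any $u\in p(w)$ can originate farther than $2R$ from $v_c$, so no such edge escapes $W$, while $|t(v_c)|\leq 1$ prevents $v_c$ from participating in a second reasoning step that might be situated differently in the two instances. Care is also needed at the boundary, handled by the empty-element padding, and in justifying that the causal vertex lies within the span of its inputs so that the second bound is genuinely $2R$ and no larger.
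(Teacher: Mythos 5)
Your disposal of requirement (ii) and your reduction of requirement (i) to the predicate ``$t(v_c)=\{w\}$ exists and every member of $p(w)$ is valued'' are fine and agree with the paper's proof in substance. The gap sits exactly where you predicted it, and the two ingredients you offer do not close it. First, your radius-$2R$ bound for the in-neighbors of a possibly unvalued $u\in p(w)$ rests on a ``put-back convention'' that places a causal vertex at a position inside the span of its inputs. The paper provides no such convention: sub-step~3 is explicitly defined ``by specific rules of the reasoning problem'' and its general treatment is left open, so an unvalued vertex simply has no index in $s^0$ and the distance $d(\cdot,\cdot;s^0)$ of Eq.~\eqref{eq: max_input_ele_dist} is not defined for it; your second bound has nothing to measure. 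Second, and more fundamentally, locality (``no relevant edge escapes $W$'') does not imply sameness: the structure map $g$ could a priori attach the central element to a step in $s^0$ and to no step, or to a differently shaped step, in $s^1$, on the basis of context outside $W$, without violating either $R<\infty$ (which only bounds distances \emph{within} a step) or per-instance uniqueness $|t(v_c)|\leq 1$. Ruling out precisely this cross-instance disagreement is the entire content of Eq.~\eqref{eq: (n,r)-consistent}, so asserting that the predicate is ``determined by $W$ alone'' is circular at the crux.

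What your proposal is missing is the mechanism the paper uses to break this circularity: a contradiction argument in which hypothesis~(ii) is applied \emph{across the two instances}, not merely to make $t(v_c)$ a singleton inside one instance (your only use of it). Assuming $s_c\in\bigcup\Tilde{g}(s^0)$ but $s_c\notin\bigcup\Tilde{g}(s^1)$, the paper produces a valued element $w_0\in p(v_0)$ lying within distance $R$ of the center, and observes that any reasoning step $p(v_1)$ containing $w_0$ in $g(s^1)$ cannot contain $s_c$, hence differs from $p(v_0)$; a second application of the definition of $R$, now centered at $w_0$, places $p(v_1)$ within radius $2R$ of the center, i.e., still inside $I$ (this double application is why the window must have length $4R+1$ rather than $2R+1$). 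The conclusion---one and the same element of the shared window belonging to two different reasoning steps, both visible in $I$---is what contradicts hypothesis~(ii). Without invoking hypothesis~(ii) in that role, your direct argument cannot be completed, so the proposal as written has a genuine gap rather than being an alternative proof.
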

The proof is given in \textit{Appendix}~\ref{sec.mied}. If a problem is $(n,r)$-consistent, for any $n$ $r$-length intervals $\{I_1, \dots, I_n\}$ of a problem instance, since $\Tilde{g}(s^0)|_{\{s_{j_{c}} |\, j = 1, \dots, n\}}$ is identical for $\forall\, s^0 \text{ containing } \{I_1, \dots, I_n\}$, to simplify our notation, we can define the following function $\gamma: X^{r \times n} \rightarrow 2^{n}$:
\begin{equation}
\gamma (I_1, \dots, I_n) \overset{def}{=} \Tilde{g}(s^0)|_{\{s_{j_{c}} |\, j = 1, \dots, n\}}.
\end{equation}
The $(n, r)$-consistent is the \textit{necessary} condition to well-define $\gamma$. 
With $\gamma$ defined, it's intuitive to scan $s^0$ by sliding $\{I_1, \dots, I_n\}$ to obtain $\Tilde{g}(s^0)$ 
as shown in Theorem \ref{thm: recover_g_by_gamma}. 
\vspace{+2mm}
\begin{theorem}
If the problem is $(n, r)$-consistent, then $\gamma$ is well-defined. For $\forall\, s^0$, the elements involved in the next reasoning step, i.e. $\bigcup \Tilde{g}(s^0)$, can be found by $\gamma$. 
\label{thm: recover_g_by_gamma}
\end{theorem}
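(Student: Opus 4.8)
The plan is to establish the two claims of the theorem in turn: that condition (i) of $(n,r)$-consistency makes $\gamma$ a genuine function, and that an exhaustive sliding-window scan of $s^0$, with each window tuple evaluated by $\gamma$, reconstructs the flat set $\bigcup \Tilde{g}(s^0)$ of elements entering the next reasoning step.

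For well-definedness I would argue directly from the defining identity $\gamma(I_1,\dots,I_n) = \Tilde{g}(s^0)|_{\{s_{j_{c}}\}}$. The right-hand side refers to a particular ambient CoT step $s^0$ in which the intervals $I_1,\dots,I_n$ sit as contiguous sub-sequences, so $\gamma$ is a legitimate function exactly when this value is invariant under the choice of witness. That invariance is precisely Definition \ref{def: (n,r)-consistent}(i): for any two steps $s^0,s^1$ containing $\{I_1,\dots,I_n\}$ as contiguous sub-sequences, Eq.~\eqref{eq: (n,r)-consistent} asserts $\Tilde{g}(s^0)|_{\{s_{j_{c}}\}} = \Tilde{g}(s^1)|_{\{s_{j_{c}}\}}$. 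Thus every admissible witness returns the same value and $\gamma$ is well-defined; the empty-element padding convention ensures that every position of an instance may serve as a center, so no tuple of intervals is excluded by boundary effects.

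For the recovery I would make the scan explicit: after padding $s^0$, enumerate all tuples of $n$ contiguous $r$-length windows drawn from $s^0$, evaluate $\gamma$ on each, and output $\bigcup \gamma$ over all placements. The technical heart is the set identity, for $K = \{s_{1_{c}},\dots,s_{n_{c}}\}$,
\begin{equation*}
\bigcup \big(\Tilde{g}(s^0)|_{K}\big) = \Big(\bigcup_{p(v) \in \Tilde{g}(s^0)} p(v)\Big) \cap K = \big(\bigcup \Tilde{g}(s^0)\big) \cap K,
\end{equation*}
which follows by unpacking the nested-set definition $\Tilde{g}(s^0)|_{K} = \{\{v' \in p(v) \cap K\} \mid p(v) \in \Tilde{g}(s^0)\}$ and distributing the union over intersection. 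Soundness is then immediate, since windows taken from $s^0$ make $s^0$ itself an admissible witness, so each $\gamma$-value is a subset of $\bigcup \Tilde{g}(s^0)$. Completeness follows because, thanks to padding, every element $e \in \bigcup \Tilde{g}(s^0)$ is the center of some window in some placement; by the identity $e$ then appears in that placement's $\gamma$-value, so the union over placements recovers all of $\bigcup \Tilde{g}(s^0)$. Condition (ii) enters to guarantee that the $n$-window scope is wide enough to contain a complete input combination $p(v_0)$, so the recovered elements can be regrouped into valid inputs for the causal function $\Hat{f}$ of Theorem \ref{thm: full_causal_function} and fed into the recursive solver of Theorem \ref{thm: recursive_dag_solved}.

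I expect the main obstacle to be the completeness/coverage bookkeeping rather than any deep estimate: one must unpack the set-of-sets restriction $\Tilde{g}(s^0)|_{K}$ correctly, confirm that the distributive identity above survives when some $p(v) \cap K$ is empty, and verify that Definition \ref{def: (n,r)-consistent}(ii) indeed supplies, for each combination, a placement of the $n$ windows covering it with the target element at a center. Getting the empty-element padding to interact cleanly with window placement near the two ends of $s^0$ — so that every genuine element can be centered without running off the sequence — is the fiddly step that must be handled with care.
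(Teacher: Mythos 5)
Your proposal is correct and follows essentially the same route as the paper's proof: well-definedness of $\gamma$ is read off directly from Definition \ref{def: (n,r)-consistent}(i), and $\bigcup \Tilde{g}(s^0)$ is recovered by a padded sliding-window scan in which each element is placed at a window center and the restriction identity $\bigcup\bigl(\Tilde{g}(s^0)|_{K}\bigr) = \bigl(\bigcup \Tilde{g}(s^0)\bigr) \cap K$ extracts its membership. The paper's version fixes only $I_1$ at each center with $I_2,\dots,I_n$ arbitrary and unions over positions, while you enumerate all placements and argue soundness and completeness separately, but this is a presentational difference, not a different argument.
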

See \textit{Appendix}~\ref{sec.(n,r)-consistency} for the proof. Now we know that when $\gamma$ is given, $\Tilde{g}(s^0)$ can be recovered by Theorem \ref{thm: recover_g_by_gamma} regardless of the length. The only question is whether $\gamma$ is learnable. Indeed, $\gamma: X^{r \times n} \rightarrow 2^n$ can be perfectly learned as $\gamma$ taking finite ($r \times n$) elements as input. See Theorem \ref{thm: learn_gamma_is_possible}. 
\vspace{+2mm}
\begin{theorem}
If a problem is $(n, r)$-consistent, then there exists an approximation function $\Hat{\gamma}: X^{r \times n} \rightarrow 2^{n}$, s.t. $\Hat{\gamma}(I_1, \dots, I_n) = \gamma(I_1, \dots, I_n),\,\forall\, \{I_1, \dots, I_n\} \in X^{r \times n}$. 
\label{thm: learn_gamma_is_possible}
\end{theorem}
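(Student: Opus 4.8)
The plan is to reduce the statement to the finite-domain learnability result already established in Theorem~\ref{thm: full_causal_function}. The essential point is that, under $(n,r)$-consistency, $\gamma$ is an honest function on a \emph{finite} set, so perfect approximation amounts to reproducing its value on each input of that set, which full coverage of the domain permits. I would therefore proceed in three moves: (1) certify that $\gamma$ is well-defined; (2) observe that its domain is finite; and (3) invoke the existing finite-input approximation theorem.

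First I would use the hypothesis together with Theorem~\ref{thm: recover_g_by_gamma} to guarantee that $\gamma \colon X^{r\times n}\rightarrow 2^n$ is well-defined, i.e.\ that the value $\gamma(I_1,\dots,I_n)$ depends only on the $n$ intervals $(I_1,\dots,I_n)$ and not on the particular instance $s^0$ in which they are embedded. This is exactly the content of Definition~\ref{def: (n,r)-consistent}(i): Eq.~\eqref{eq: (n,r)-consistent} asserts that $\Tilde{g}(s^0)|_{\{s_{j_c}\}} = \Tilde{g}(s^1)|_{\{s_{j_c}\}}$ for any two CoT steps $s^0,s^1$ containing the same $n$ intervals as contiguous sub-sequences, so the assignment $(I_1,\dots,I_n)\mapsto \Tilde{g}(s^0)|_{\{s_{j_c}\}}$ is single-valued. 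Without this step $\gamma$ would merely be a relation, and the claim would be vacuous; this is precisely where the consistency assumption does its work.

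Second I would record that the domain $X^{r\times n}$ is finite. Since $r$ and $n$ are fixed positive integers and $|X|<\infty$, we have $|X^{r\times n}| = |X|^{rn} < \infty$, and the codomain $2^n$ is likewise finite with $2^n$ labels. Hence $\gamma$ is a map between two finite sets. Third, I would identify $\mathbf{X} = X^{r\times n}$ as a finite input space and take the (fully covered) training set $D = X^{r\times n}$; since the conclusion quantifies over all tuples in $X^{r\times n}$, full coverage is available at no cost. Applying Theorem~\ref{thm: full_causal_function} verbatim (with $2^n$ playing the role of the finite range) then yields an approximation function $\Hat{\gamma}$ with $\Hat{\gamma}(I_1,\dots,I_n)=\gamma(I_1,\dots,I_n)$ for every $(I_1,\dots,I_n)\in X^{r\times n}$. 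Concretely, one stores the finite lookup table of $\gamma$; any hypothesis class able to memorize a finite table realizes $\Hat{\gamma}$ exactly, so the approximation error is zero.

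The only genuine obstacle is the well-definedness of $\gamma$ in the first move: if the problem were not $(n,r)$-consistent, the same interval tuple could legitimately demand different ``next-step'' labels in different instances, and no function could match all of them. Once $(n,r)$-consistency secures well-definedness, the remainder is a direct instance of the finite-domain learnability theorem and requires no further approximation argument.
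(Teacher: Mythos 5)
Your proposal is correct and takes essentially the same route as the paper: the paper's proof applies its interpolation result (Lemma~\ref{lem: interpolation}) directly with $X^{r \times n}$ as domain, $2^n$ as codomain, and $D$ the full (finite) graph of $\gamma$, which is exactly the finite-table memorization you perform by invoking Theorem~\ref{thm: full_causal_function} (itself a one-line corollary of that lemma). Your explicit preliminary steps---well-definedness of $\gamma$ from Definition~\ref{def: (n,r)-consistent}(i) and finiteness of $X^{r\times n}$---are handled in the paper by the preceding Theorem~\ref{thm: recover_g_by_gamma} and left implicit, respectively, so they constitute added care rather than a different argument.
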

\vspace{-2mm}
See \textit{Appendix}~\ref{sec.(n,r)-consistency} for the proof. When the problem is $(n, r)$-consistent, $\gamma$ is well-defined. Since $\gamma$ only takes finite elements as input, it can be perfectly learned (Theorem \ref{thm: learn_gamma_is_possible}). When $\gamma$ is given, for $\forall\, s^0$ of arbitrary  length, $\Tilde{g}(s^0)$ can be recovered by $\gamma$ based on Theorem \ref{thm: recover_g_by_gamma}. When $\Tilde{g}(s^0)$ is given, and the causal function is well-learned by Theorem \ref{thm: full_causal_function}, the problem can be recursively solved based on Theorem \ref{thm: recursive_dag_solved}.

To close, we summarize the proposed theory. In
Sec.~\ref{sec: dag}, we assume that the DAG structure of a reasoning problem is given and show that the problem is recursively solvable if the input space of the causal function is finite. 
In Sec.~\ref{sec:cot_and_others}, we assume that only unstructured sequence data is given, which is the practical case, and show that if the problem has the property of $R < \infty$, it is solvable for LG. 
This subsection extends the result of Sec. \ref{sec:cot_and_others} and shows that a problem with $R = \infty$ and $(n, r)$-consistency is also solvable for LG. 



\section{Related Work}
\label{sec.related}
{We now review the related work. We focus on the theoretical work here and leave the empirical work to \textit{Appendix}~\ref{appendix.related}, which includes: 1) evaluations or improvements on the reasoning capabilities of LLMs and CoT and its variants, 2) modifying the Transformer and/or learning biases to better solve LG, but they still cannot solve it~\cite{duan2023interpolation,zhou2023algorithms,jelassi2023length,chi2023transformer,nangia2018listops,bowman2015tree,tay2021long,chowdhury2023monotonic} and 3) LG for text generation (a different problem)~\cite{sun2022length,press2021train,ruoss2023randomized,han2023lm}. 

\cite{abbe2023generalization} studied out-of-distribution (OOD) generalization of reasoning where part of the domain is entirely unseen during training, e.g., some value combinations are missing. It analyzes the learning biases of different network architectures and activation functions. All resulting models may  
predict wrongly on the OOD data.  {They also analyzed LG based on biases and used curriculum learning to improve the performance of the parity problem. Our work identifies and proves conditions under which the LG can be solved.} 

LG is related to OOD generalization.~However, there is a key difference. Since the maximal size/length of the training problems is always finite, a larger size/length problem can always appear at test time, which can be seen as OOD. But such an OOD is unavoidable regardless of how much data is used in training as long as it is finite. 
In \cite{abbe2023generalization}, OOD means some values or value combinations never appeared in training but appeared in testing.~This problem is solvable 
with more diverse training data. 
} 

\cite{wies2023sub} proved that when sufficient intermediate steps (or CoT) are available, a neural network can efficiently learn any function in the \textbf{P} time complexity class. In addition, 
there exist functions in the \textbf{P} time complexity class that cannot be learned by any polynomial time learning algorithm without CoT. \cite{feng2023towards} showed why CoT works on problems that can be decomposed into sub-problems (DAG in our case). They also proved that it is not learnable directly without CoT. \cite{li2023dissecting} showed that CoT can enable the model to identify each step and then work on the step before moving to the next step in the CoT chain. \cite{prystawski2023think} studied why and how CoT works in LLMs. 
\cite{malach2023auto} proved that even simple models like linear next-token predictors trained on CoT data are universal learners. The paper also introduces the \textit{length complexity} to measure how many intermediate tokens are required to learn a function.

However, the theorems in these papers 
are based on the traditional i.i.d setting, i.e., under the given length/size $N$. Their statements are like ``for $N > 0$, given a dataset with training problems no longer than $N$, for any problem with length $N' \leq N$, it can be solved under some PAC upper bound.'' The key limitation is that the training problem length and testing problem length are the same with no LG. 
Our theory doesn't have this limitation and covers LG. Our statement is ``for $N > 0$, given a dataset with problems no longer than $N$, for any problem with arbitrary length $N'$, it can be solved if $|\mathbf{X}| < \infty$ and $(n,r)$-consistent.''

\section{Experiments}
\label{sec:experiment}

Given a reasoning problem as an unstructured sequence (the realistic scenario), our experiments verify three key aspects 
\begin{enumerate}
    \item 
    If the input space of the causal function of the CoT formulation is finite, i.e., $|\mathbf{X}| < \infty$, and $R < \infty$, the problem can be solved with LG. 
    \item For a CoT formulation of a problem with $R=\infty$, if it is $(n,r)$-consistent, it is also solvable for LG.
    \item {For the same reasoning problem, one CoT formulation may not be solvable for LG, but another may}.
\end{enumerate}

\begin{table*}[h]
\vspace{-3mm}
\caption{Experimental settings. Train Length: Training length. LG Test $i$: Length generalization test with longer lengths.}
\vspace{2mm}
\centering
\scalebox{0.85}{
\begin{tabular}{c|cccccc}
\toprule
 & Train Length & LG Test 1 & LG Test 2 & LG Test 3 & LG Test 4 & LG Test 5 \\
\midrule
arctan & $r \in (1/2, 2)$ & $r \in (1/3, 3)$ & $r \in (1/4, 4)$ & $r \in (1/5, 5)$ & $r \in (1/6, 6)$ & $r \in (1/10, 10)$ \\
arithmetic in $F_7$  & $L \in [3, 20)$ & $L \in [3, 30)$ & $L \in [3, 40)$ & $L \in [3, 50)$ & $L \in [3, 60)$ & $L \in [3, 100)$ \\
parity-[2-line]  & $L \in [1, 8)$ & $L \in [1, 9)$ & $L \in [1, 10)$ & $L \in [1, 11)$ & $L \in [1, 16)$ & $L \in [1, 21)$ \\
addition-[1/2/3-line] & $a, b \in [0, 10^8)$ & $a, b \in [0, 10^9)$ & $a, b \in [0, 10^{10})$ & $a, b \in [0, 10^{11})$ & $a, b \in [0, 10^{16})$ & $a, b \in [0, 10^{21})$ \\
multiplication-[1/8-line] & $a, b \in [0, 10^6)$ & $a, b \in [0, 10^7)$ & $a, b \in [0, 10^8)$ & $a, b \in [0, 10^9)$ & $a, b \in [0, 10^{10})$ & $a, b \in [0, 10^{11})$ \\
\bottomrule
\end{tabular}
}
\label{tab:experiment_setting}
\end{table*}

\subsection{Experimental Problems}
We use 5 reasoning problems: (1) \textbf{arctan}, (2) \textbf{arithmetic in $F_7$} (the finite prime field with seven elements), (3) \textbf{parity}, (4) \textbf{addition} (with 3 different CoT formulations), and (5) \textbf{multiplication} (with 2 different CoT formulations).  
We list the detailed parameters of the training and testing datasets of each problem in Table \ref{tab:experiment_setting} (explained below). The properties of each problem are listed in Table \ref{tab:experiment_property}. 
Some training examples for each problem are given in Table \ref{tab:examples} in \textit{Appendix}~\ref{appendix.examples}, which also describes the \textbf{implementation details}. 

\begin{table}
\caption{Properties of the problems. `-cs' means `-consistency'.} 
\vspace{2 mm}
\centering
\scalebox{0.85}{
\begin{tabular}{c|c|c|c}
\toprule
 & $|\mathbf{X}|$ in & $R$ in & $(n, r)$-cs \\
  & Thm \ref{thm: full_causal_function} & Thm \ref{thm: find_input_vertices_is_possible} & (not unique) \\
 
\midrule
arctan & $|\mathbf{X}|=\infty$ & $R = 1$ & $(1, 2)$-cs\\
arithmetic in $F_7$ & $|\mathbf{X}| = 13^5$ & $R=4$ & $(1, 17)$-cs\\
parity-[2-line] & $|\mathbf{X}| = 3^6$ & $R=1$ & $(1, 3)$-cs \\
addition-[1-line] & $|\mathbf{X}| = 14^4$ & $R=\infty$ & not $(n, r)$-cs \\
addition-[2-line] & $|\mathbf{X}| = 17^{18}$ & $R=\infty$ & $(3, 3)$-cs \\
addition-[3-line] & $|\mathbf{X}| = 14^9$ & $R=1$ & $(1, 3)$-cs \\
multiplication-[1-line] & $|\mathbf{X}| = \infty$ & $R=\infty$  & not $(n, r)$-cs \\
multiplication-[8-line] & $|\mathbf{X}| = 19^{216}$ & $R=\infty$ & $(9, 3)$-cs \\
\bottomrule
\end{tabular}
}
\label{tab:experiment_property}
\end{table}
\subsection{Results and Analysis} 
The \textbf{arctan} problem verifies that the causal function may make mistakes when \textit{the input space is not finite}. The training data is sampled from an annulus of radius $r \in (1/2, 2)$. We test the performance on different annuluses, listed in Table \ref{tab:experiment_setting}. The accuracy is reported in Fig. \ref{fig:experiments}. It decays as the annulus becomes larger, which satisfies Corollary \ref{thm: not_full_causal_function_inf_x}. 

\begin{figure}
\caption{
Test results in accuracy.
} 
\vspace{+2mm}
\includegraphics[width=0.80\linewidth]{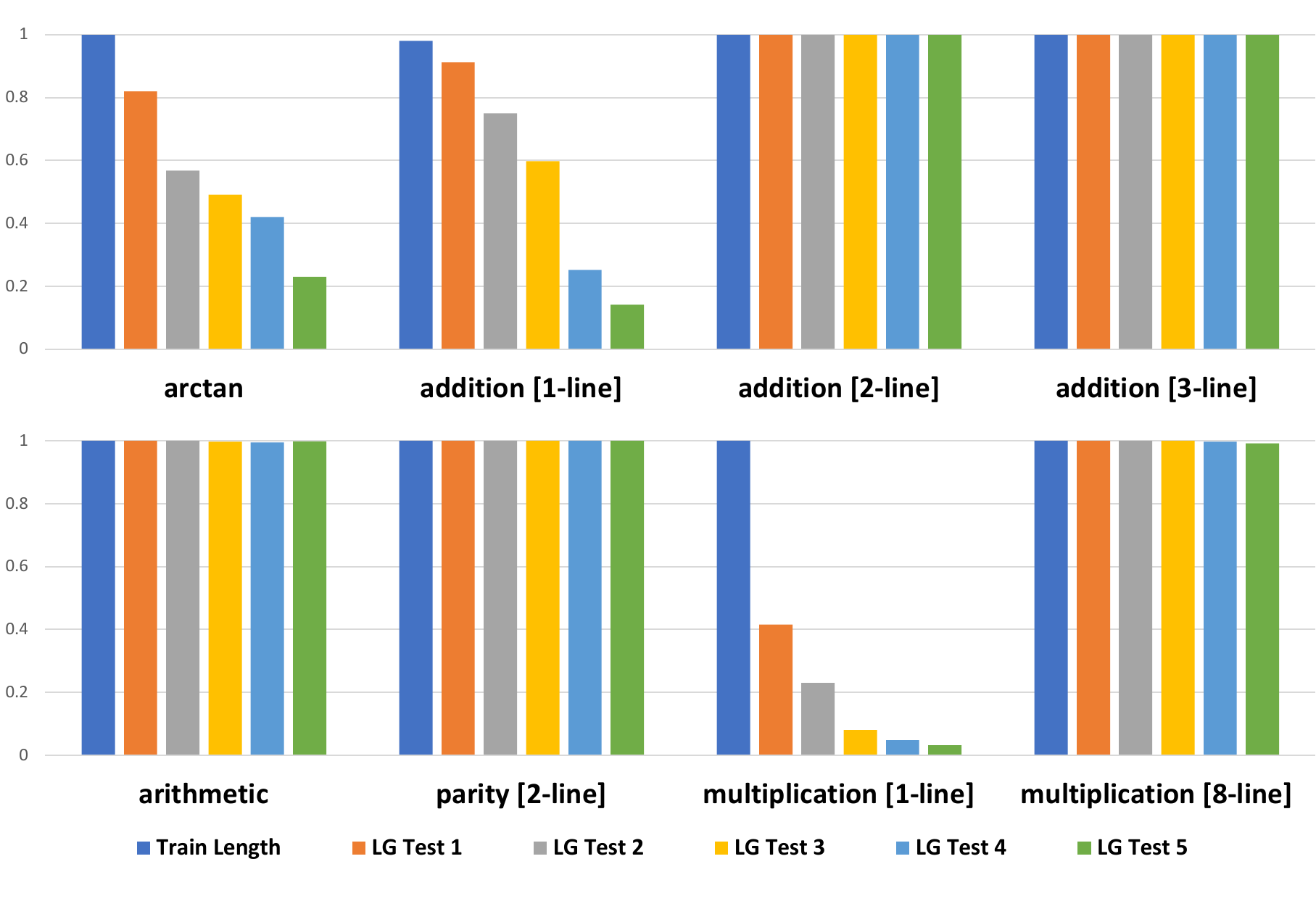}
\label{fig:experiments}
\end{figure}

The \textbf{arithmetic in $F_7$} problem is formulated in the usual way, e.g., $(3 + 2) \times 1 = 5 \times 1 = 5$. The input space is finite because
$|\mathbf{X}| = |\{(,),+,-,\times, /,0,1,2,3,4,5,6\}|^5 < \infty$.  
$R = 4$ as any combination of elements to be calculated in the next step is within a window of at most 5 elements, e.g., $(2+3)$. 
The training and testing settings are listed in Table \ref{tab:experiment_setting}. The training data has at most 20 elements, i.e. $L \in [3, 20)$, and the test data has at most 20, 30, 40, 50, 60, 100 elements. Fig. \ref{fig:experiments} shows this problem achieves 100\% accuracy. 

The \textbf{parity} problem is to decide whether there are an \textit{even} or \textit{odd} number of 1's in a sequence of $0$'s and $1$'s. We formulate it with 2 lines, i.e., \textit{parity-[2-line]}. On the $2$nd line, $?$ indicates the current position of the CoT process, $1$ represents \textit{odd} and $0$ represents \textit{even} (see Table \ref{tab:examples} in \textit{Appendix}~\ref{appendix.examples}). The input space $|\mathbf{X}| = |\{0, 1, ?\}|^6 < \infty$ as the causal function for each step only needs an interval of length $3$, which contains $3 \times 2$ elements. At each CoT step, only `$?$' and the element after `$?$' are input elements, by definition, $R = 1$. The problem is $(1, 3)$-consistent because (1) the central element of a $3$-length interval is the input element \textit{iff} `$?$' is on the left/center of the interval, and (2) an interval of length $3$ can cover `$?$' and the element after `$?$'.
Fig. \ref{fig:experiments} shows it achieves 100\% accuracy.

The \textbf{addition} problem is formulated in CoT in three ways: \textit{addition-[1-line]}, \textit{addition-[2-line]}, and \textit{addition-[3-line]}. The addition-[1-line] formulation does not achieve LG, but \textit{addition-[2-line]} and addition-[3-line] do. They are all trained with 7 or fewer digits additions, i.e., $a, b \in [0, 10^8)$ (Table~\ref{tab:experiment_setting}), and tested on fewer than or equal to 7, 8, 9, 10, 15, 20 digits additions.

Fig. \ref{fig:experiments} shows that \textit{addition-[1-line]} fails to generalize beyond 7 digits as Sec. \ref{sec: (n, r)-consistent} showed that it is not $(n, r)$-consistent. It also has $R=\infty$. Let us see why by considering $x=\text{`}285+9805=~?\text{'}$ ($?$ means 0 is carried).
The elements/digits to be calculated next are $x[2]=\text{`}5\text{'}$, $x[7]=\text{`}5\text{'}$ and $x[9]=\text{`}?\text{'}$. The maximal distance between them is $7$. This distance increases as the number of digits increases. By definition, $R = \sup \max_{v' \in (p \circ t)(v)} d(v', v) = \infty$. 

The \textit{addition-[2-line]}, which has been described in Sec. \ref{sec: (n, r)-consistent}, achieves 100\% accuracy (Fig.~\ref{fig:experiments}) because it's $(3, 3)$-consistent. Like \textit{addition-[1-line]}, it has $R = \infty$. $|\mathbf{X}|=|\{I, i, J, +, =, ?, c, 0, 1, 2, 3, 4, 5, 6, 7, 8, 9\}|^{18} < \infty$, because $3$ intervals of length $3$ have $3 \times 3 \times 2$ elements. 

{\color{black}For \textit{addition-[3-line]}, $x$ becomes (each element has 3 dimensions), 
\begin{equation}
x = \left(\begin{aligned}
\ 285 \\
9805 \\
\ \ \ ?
\end{aligned}\right) = (\text{`}\ 285\text{'}, \text{`}9805\text{'}, \text{`}\ \ \ ?\text{'})^T.
\label{eq.multi-d}
\end{equation}
In each step, $x[i]$ only needs to consider its right neighbor $x[i+1]$. 
$x[3] = \left(\text{`}5\text{'}, \text{`}5\text{'}, \text{`}?\text{'}\right)^T$ is enough for calculation.
$x[2] = \left(\text{`}8\text{'}, \text{`}0\text{'}, \text{`}\ \text{'}\right)^T$ only needs to consider $x[3]$.~The maximal distance of elements to be calculated next is always $1$ (i.e., $R = 1$), which doesn't depend on the number of digits. 
The problem is $(1, 3)$-consistent because (1) the central element of an interval of length $3$ is the input element \textit{iff} `?' or `c' is at the center/right of the interval, and (2) a $3$-length interval centering at `?' or `c' can cover all the input elements of one reasoning step. $|\mathbf{X}| = |\{+, =, ?, c, 0, 1, 2, 3, 4, 5, 6, 7, 8, 9\}|^9 < \infty$, because an interval of length $3$ has $3 \times 3$ elements.

Fig. \ref{fig:experiments} shows that the \textit{addition-[3-line]} formulation performs with $100\%$ accuracy on 7, 8, 9, 10, 15, and 20 digits because its $R < \infty$.}

The \textbf{multiplication} of two integers is formulated in two ways: \textit{multiplication-[1-line]} and \textit{multiplication-[8-line]}. 

For \textit{multiplication-[1-line]}, we decompose the problem into two stages. In the first stage, we transform multiplication into a summation of multiple integers. In the second stage, we solve the summation recursively. An example is shown in Table \ref{tab:examples} in \textit{Appendix}~\ref{appendix.examples}. For the second stage, it's solvable by \textit{addition-[2/3-line]}. However, we have $R = \infty$ in stage 1. For instance, let input[k] = $\text{`}a \times b=\underbrace{a+\dots+a}_{k}+?\text{'}$. When $k < b - 1$, output[k] = $\text{`}a \times b=\underbrace{a+\dots+a}_{k+1}+?\text{'}$. when $k = b - 1$, output[k] = $\text{`}a \times b=\underbrace{a+\dots+a}_{k+1}\text{'}$. In this example, whether to add `$+?$' or to go to the second stage depends on $b$ and the number of existing $a$'s. All the elements are input elements in a reasoning step. Therefore, $|\mathbf{X}| \geq \lim_{a, b \rightarrow \infty} |\mathbf{X}|^b = \infty$, the maximal distance between input elements $R$ is $\infty$. 
Since all the elements are input elements for the next reasoning step, for any $(n, r)$, there always exists a longer problem instance such that $n$ $r$-length intervals cannot cover all the input elements of a reasoning step. Thus it's not $(n, r)$-consistent. Since it has $|\mathbf{X}| =\infty$, $R = \infty$, and not $(n, r)$-consistent, it isn't solvable for LG. Fig.~\ref{fig:experiments} shows poor LG accuracy for \textit{multiplication-[1-line]}.

For the \textit{multiplication-[8-line]} formulation (each element has 8-dimensions), we present an example in Table \ref{tab:examples} in \textit{Appendix}~\ref{appendix.examples}. When calculating $a \times b$, since \textit{addition-[2/3-line]} solves LG, we only need to multiply each digit of $a$ and each digit of $b$. We use some position indicator tokens: Tokens `I' and `i' indicate the positions of the digits in $a$ and $b$ to multiply next, where any interval of length $3$ that contains `I' or `i' determines the digits to multiply next. Static tokens `E', `S', `e', and `s' are used to indicate the start and end of $a$ and $b$, where any interval of length $3$ that contains `E', `S', `e' or `s' determines the next position of `I' or `i'. Token `J' indicates the position of the multiplication result of the current step, where any interval of length $3$ that contains `J' determines the position of the result. Tokens `F' and `T' are determined by shifting `e' and `s' to the left by I'th tokens, which helps find the next position of `J'. More descriptions are given in \textit{Appendix}~\ref{appendix.examples}. The \textit{addition-[3-line]} method is applied to add the output of each CoT step to the answer. It is obvious that $R = \infty$ as the distance between the most left digit of $a$ and the most right digit of $b$ is arbitrarily large as $a, b \rightarrow \infty$. 
The problem is $(9, 3)$-consistent because (1) the central element of a $3$-length interval is an input element \textit{iff} the central element has one of these $9$ tokens or the right element has one of `I', `i', `F', `T', `J', and (2) the $9$ intervals of length $3$ centering at these $9$ tokens respectively can cover all the input elements of one reasoning step.
$|\mathbf{X}| = |\{E, S, e, s, I, i, F, T, J, 0, 1, 2, 3, 4, 5, 6, 7, 8, 9\}|^{216} < \infty$, because $9$ intervals of length $3$ with $8$-lines have $9 \times 3 \times 8$ elements. It is solvable for LG. Our experiments show the problem is solved with 100\% accuracy (see Fig. \ref{fig:experiments}).


\vspace{-1mm}

\section{Conclusion}
\label{sec: discussions}
Length generalization (LG) is a challenging problem in learning reasoning skills. 
There is little theoretical understanding so far. This paper identified and proved some sufficient conditions for LG. The theory is verified by learning to solve several challenging reasoning problems. 

\textit{Future directions}: (1) This work considers reasoning problems that can be structured as DAGs. However, there are reasoning problems that cannot be represented as DAGs (e.g., temporal reasoning). Studying reasoning problems that cannot be model as DAGs is an interesting future direction. (2) The proposed conditions are sufficient conditions. We still don't know the necessary conditions. Investigating necessary conditions is also an interesting future direction.  

\section*{Acknowledgments}
Bing Liu's work was supported in part by NSF grants (IIS-1910424, IIS-1838770, and CNS-2225427).
\nocite{langley00}

\bibliographystyle{alpha}
\bibliography{IAES2024}


\newpage
\appendix

\vspace{4mm}
\subsection{Proofs - Causal Function}
\label{sec.dynamic.causal}
\vspace{+2mm}

We firstly provide Lemma \ref{lem: interpolation}, which is fundamental to our subsequent proofs. Lemma \ref{lem: interpolation} is similar to the \textit{universal approximation theorem} (UAT)~\cite{haykin1998neural,hassoun1995fundamentals,pinkus1999approximation}. We do not directly use the UAT as it is for neural networks but our results are not bound to any specific learning paradigm or algorithm. However, due to the UAT, our results apply to learning in neural networks. The UAT guarantees that $\sup_{x \in K}|| f(x) - g(x) || < \eta$, where $g(x)$ is the target function and $K$ is compact. Applying the UAT in our setting of Lemma \ref{lem: interpolation}, we have $\sup_{1 \leq i \leq n}|| f(x_i) - y_i || < \eta$, where $n$ is the number of training samples. The UAT has a condition that $K$ is compact. In Lemma \ref{lem: interpolation}, since we have a stronger condition that $K = \{(x_i, y_i) |\, 1 \leq i \leq n\}$ is finite, we provide a stronger result that $\sup_{1 \leq i \leq n}|| f(x_i) - y_i || = 0$.

\begin{lemma}[A Simple Interpolating Function]
Let $(X, d_X)$ be a metric space. Let $(Y, ||\cdot||)$ be a Banach space. 
For any $D \subseteq \{(x, y) |\, x \in X, y\in Y\}$, if $x \neq x',\, \forall\, (x, y), (x',y') \in D$, then there exists a continuous approximation function $f: X \rightarrow Y$, s.t. $f(x) = y,\,\forall\, (x, y) \in D$.
\label{lem: interpolation}
\end{lemma}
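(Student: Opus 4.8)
The plan is to give an explicit interpolating function built from metric ``bump'' functions, exploiting the standing finiteness assumption $|D| < \infty$ stated just before the lemma. First I would enumerate $D = \{(x_1, y_1), \dots, (x_n, y_n)\}$ with the $x_i$ pairwise distinct (allowed by the hypothesis that no two pairs share a first coordinate). If $n \le 1$ a constant map already interpolates, so assume $n \ge 2$ and define the separation constant
\begin{equation}
\delta = \min_{1 \le i < j \le n} d_X(x_i, x_j).
\end{equation}
Because there are only finitely many, pairwise distinct, points, this minimum is attained and $\delta > 0$; this positivity is precisely where finiteness of $D$ does its work.

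Next I would build, for each index $i$, a continuous scalar ``tent'' function equal to $1$ at $x_i$ and vanishing at every other sample point. A convenient choice is
\begin{equation}
\phi_i(x) = \max\!\Big(0,\ 1 - \tfrac{2}{\delta}\, d_X(x, x_i)\Big).
\end{equation}
The map $x \mapsto d_X(x, x_i)$ is $1$-Lipschitz, hence continuous, and $\phi_i$ is a composition of continuous real functions, so $\phi_i$ is continuous. By construction $\phi_i(x_i) = 1$, while for $j \neq i$ we have $d_X(x_j, x_i) \ge \delta > \delta/2$, forcing $\phi_i(x_j) = 0$. Thus each $\phi_i$ reproduces the Kronecker behaviour $\phi_i(x_j) = 1$ if $j=i$ and $0$ otherwise on the sample points.

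Finally I would set
\begin{equation}
f(x) = \sum_{i=1}^n \phi_i(x)\, y_i,
\end{equation}
which lands in $Y$ since $Y$ is a vector space. Each summand $x \mapsto \phi_i(x) y_i$ is continuous into the Banach space $(Y, \|\cdot\|)$, because $\|\phi_i(x) y_i - \phi_i(x') y_i\| = |\phi_i(x) - \phi_i(x')|\,\|y_i\|$ tends to $0$ as $x' \to x$; a finite sum of continuous maps is continuous, so $f$ is continuous. Evaluating at a sample point yields $f(x_j) = \sum_i \phi_i(x_j) y_i = y_j$, the required interpolation property.

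The computations above are all routine; the single place where the hypotheses truly matter — and hence the only genuine obstacle — is guaranteeing $\delta > 0$, which needs $D$ finite with distinct first coordinates, together with the degenerate case $n \le 1$ dispatched separately. Note that no normalization of the $\phi_i$ is required, since the tent functions already satisfy the exact Kronecker condition at the sample points; this keeps the argument clean and sidesteps any worry about a denominator vanishing, unlike a partition-of-unity quotient.
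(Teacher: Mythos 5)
Your proof is correct, and it takes a genuinely different (and in fact cleaner) route than the paper's. The paper uses a Shepard-style inverse-distance weighting: it sets $\epsilon$ to the minimum pairwise distance, defines the singular kernel $K_\epsilon(x, x') = \epsilon / d_X(x, x')$, forms the normalized quotient $f(x) = \sum_i y_i K_\epsilon(x, x_i) \big/ \sum_i K_\epsilon(x, x_i)$ on $X \setminus \{x_1, \dots, x_n\}$, and then must argue via an $\eta$--$\delta$ limit computation that $f(x) \to y_{i_0}$ as $x \to x_{i_0}$ in order to extend $f$ continuously to the sample points. Your tent functions $\phi_i(x) = \max\bigl(0, 1 - \tfrac{2}{\delta} d_X(x, x_i)\bigr)$ achieve an exact Kronecker property at the sample points, so the unnormalized sum $f = \sum_i \phi_i(\cdot)\, y_i$ interpolates by direct evaluation: no singular kernel, no quotient, and no limiting argument is needed — which is precisely the simplification you flag at the end of your write-up. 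Both arguments hinge on the same fact, namely that finiteness of $D$ (a standing assumption in the paper, used implicitly in its proof when it writes $D = \{(x_1, y_1), \dots, (x_n, y_n)\}$) makes the minimum pairwise distance strictly positive; you make this dependence explicit and also dispatch the degenerate case $n \le 1$, which the paper glosses over. What the paper's construction buys is essentially nothing extra here — Shepard interpolation generalizes more gracefully to settings where one wants weights supported everywhere, but for the lemma as stated your compactly supported bumps give a shorter and more self-contained proof.
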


\begin{proof}[Proof of Lemma \ref{lem: interpolation}]
Let 
$$
\epsilon = \min_{(x, y), (x', y') \in D} d_X(x, x').
$$
Define 
$$
K_\epsilon (x, x') = \frac{\epsilon}{d_X (x, x')}.
$$
Denote $D = \{(x_1, y_1), \dots, (x_n, y_n)\}$, let
\begin{equation}
    f(x) = \frac{\sum_{i = 1}^n y_i K_\epsilon(x, x_i)}{\sum_{i = 1}^n K_\epsilon(x, x_i)},\, x \in X \backslash \{x_1, \dots, x_n\}.
\label{eq: interpolate}
\end{equation}
For $\forall\, 1 \leq i_0 \leq n$  ($i_0$ is any integer), since $\lim_{x \rightarrow x'} K_\epsilon (x, x') = +\infty$, $\sup_{d_X(x, x')  > \epsilon} K_\epsilon (x, x') < 1$ and 
$$
f(x) = \frac{ y_{i_0} K_\epsilon(x, x_{i_0})}{\sum_{i = 1}^n K_\epsilon(x, x_i)} + \frac{\sum_{i \neq i_0} y_i K_\epsilon(x, x_i)}{\sum_{i = 1}^n K_\epsilon(x, x_i)},
$$  
it's obvious that $\forall\, \eta > 0$, $\exists\, \delta > 0$, s.t. $\forall\, d_X(x, x_{i_0}) < \delta$, $||f(x) - y_{i_0}|| < \eta$, which is $\lim_{x \rightarrow x_{i_0}} f(x) = y_{i_0}$. 
Therefore, we can define $f$ on $X$ and it's obvious that $f$ is a continuous approximation function. 
\end{proof}

~\\

\begin{proof}[\bf Proof of Theorem \ref{thm: full_causal_function}]
By Lemma \ref{lem: interpolation}, let $X^{\sup |p(v)|}$ be $X$ and $X$ be $Y$, there exists $\Hat{f}$ s.t. $\Hat{f}(p(v)) = v',\,\forall\, (v', p(v)) \in D$. 
Since $D = \{(f(p(v)), p(v)) |\, v \in V\}$, the proof is done. 
\end{proof}

~\\

\begin{proof}[\bf Proof of Corollary \ref{thm: not_full_causal_function}]
In the corollary presented in Section~\ref{sec: dag} in the main text, we wrote $D \neq X^{\sup |p(v)|}$ for simplicity. 
In detail, we have $D \neq \{(f(p(v)), p(v)) |\, v \in V\}$. 
For simplicity, denote $A = \{(f(p(v)), p(v)) |\, v \in V\}$. 

Since $D \neq A$, let $(f(s(v_0)), s(v_0)) \in A \backslash D$. 
Since $|\{f(p(v)|\, p(v) \in X^{\sup |p(v)|})\}| > 1$, let $v^{err} \in \{f(p(v)|\, p(v) \in X^{\sup |p(v)|})\}$ s.t. $v^{err} \neq f(s(v_0))$. 
Let $$D^{err} = D \cup \{(v^{err}, s(v_0))\}.$$

By Lemma \ref{lem: interpolation}, let $X^{\sup |p(v)|}$ be $X$ and $X$ be $Y$, there exists $\Hat{f}$ s.t. $\Hat{f}(p(v)) = v',\,\forall\, (v', p(v)) \in D^{err}$. 
Since $(v^{err}, s(v_0)) \in D^{err}$, $\Hat{f}$ makes a wrong prediction as $$\Hat{f}(s(v_0)) = v^{err} \neq f(s(v_0)). $$
Since $D \subseteq D^{err}$, $\Hat{f}$ is correct on training dataset as $$\Hat{f}(p(v)) = f(p(v)),\,\forall\, (f(p(v)), p(v)) \in D.$$ 
\end{proof}
 
 ~\\

\begin{proof}[\bf Proof of Corollary \ref{thm: not_full_causal_function_inf_x}]
Denote $D \subseteq \{(f(p(v)), p(v)) |\, v \in V\}$. 
For simplicity, denote $A = \{(f(p(v)), p(v)) |\, v \in V\}$. 

Since $\max(|X|, \sup |p(v)|) = \infty$, $|X^{\sup |p(v)|}| = \infty$, $|A| = \infty$. Since $|D| < \infty$, we know $D \neq A$ and $|A \backslash D| = \infty$. 
Let $(f(s(v_i), v_i)) \in A \backslash D,\,i=0,1\dots,m$. Since $|\{f(p(v)|\, p(v) \in X^{\sup |p(v)|})\}| > 1$, let $v_i^{err} \in \{f(p(v)|\, p(v) \in X^{\sup |p(v)|})\}$ s.t. $v^{err}_i \neq f(s(v_i))$. 
Let $$D^{err} = D \cup \{(f(s(v_i), v_i)),\, i = 0, \dots, m)\}.$$

By Lemma \ref{lem: interpolation}, let $X^{\sup |p(v)|}$ be $X$ and $X$ be $Y$, there exists $\Hat{f}$ s.t. $\Hat{f}(p(v)) = v',\,\forall\, (v', p(v)) \in D^{err}$. 
Since $(v_i^{err}, s(v_i)) \in D^{err}$, $\Hat{f}$ makes $m + 1$ mistakes as $$\Hat{f}(s(v_i)) = v_i^{err} \neq f(s(v_i)), \,i = 0, 1, \dots, m. $$
Since $D \subseteq D^{err}$, $\Hat{f}$ is correct on training dataset as $$\Hat{f}(p(v)) = f(p(v)),\,\forall\, (f(p(v)), p(v)) \in D.$$ 
\end{proof}

\vspace{+2mm}
\subsection{Proofs - Recursive Formula}
\label{sec.recursive.direct}
\vspace{+2mm}

\begin{proof}[\bf Proof of Theorem \ref{thm: recursive_dag_solved}]
By Theorem \ref{thm: full_causal_function}, there exists $\Hat{f}$, s.t. $\Hat{f}(p(v)) = f(p(v)),\,\forall\, (f(p(v)), p(v)) \in \{(f(p(v)), p(v)) |\, v \in V\}$. 

By the definition in Eq. \eqref{eq: def_dag} in Section~\ref{sec: dag}, for an arbitrary topological sorting of the graph $G$, we have 
$$
G_f(\{v_i, i \leq |G| |\, d(v_i) = 0\}) = (v_1, \dots, v_{|G|}), \text{where}\ 
\left\{
\begin{aligned}
&v_1 = f(s(v_1)), \\
&\dots \\
&v_n = f(s(v_n)).
\end{aligned}
\right.
$$

Given the graph $G$ and the topological sorting, by induction, we have
$$
\begin{aligned}
v_1 = f(s(v_1)) &= \Hat{f}(s(v_1)) = \Hat{v}_1, \\
&\dots \\
v_n = f(s(v_n)) &= \Hat{f}(s(\Hat{v}_n)) = \Hat{v}_n. 
\end{aligned}
$$
\end{proof}

\vspace{2mm}
\subsection{Proof - Maximal Input Element Distance of a Reasoning Step}
\label{sec.mied}

\begin{figure}

\caption{
Two examples of the \textit{ko} problem. 
} 
\center
\includegraphics[width=0.5\linewidth]{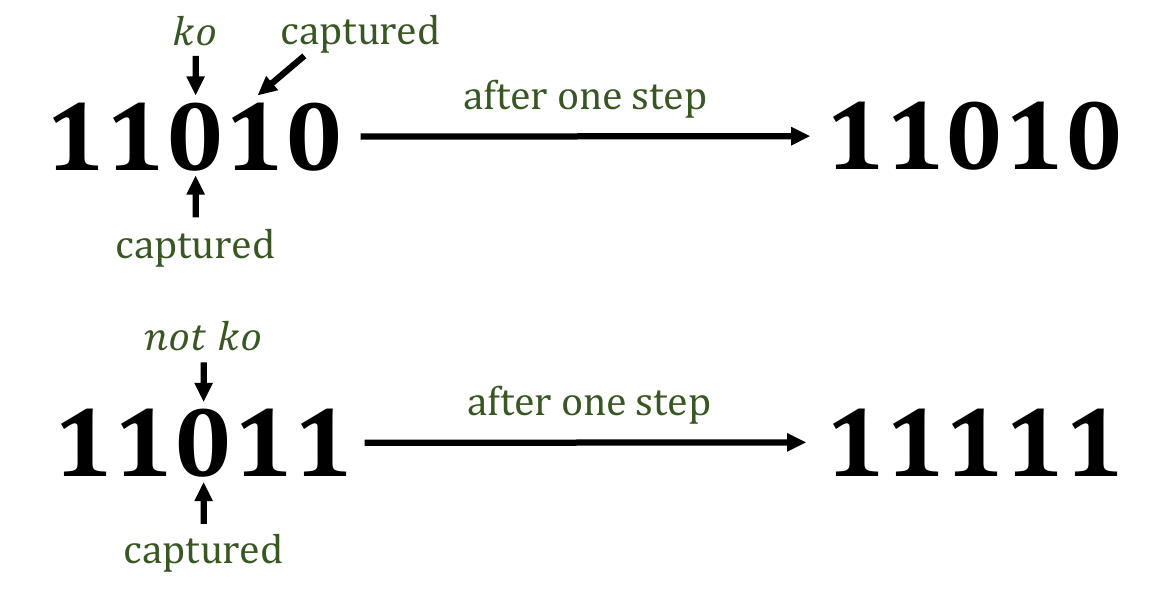}
\label{fig:example_ko}
\vspace{-2mm}
\end{figure}

Before presenting the proof for Theorem~\ref{thm: find_input_vertices_is_possible}, we first provide an example to show that a $4R$-length window is not enough to decide which elements should be used in the next reasoning or calculation step. 

Let's consider a simple one-dimensional \textit{ko} problem. Let $s = \dots 0 0 1 0 0 1 1 1 1 \dots$ be a sequence of $0$'s and $1$'s. We say $s_i$ is captured if $s_{i} \neq s_{i-1},\,s_{i} \neq s_{i+1}$. We say $s_i$ is not a ko if both $s_{i-1}$ and $s_{i+1}$ are not captured. We say $s_i$ is a ko if $s_i$ is captured and at least one of $s_{i-1}$ and $s_{i+1}$ is captured. When some $s_i$ is captured and is not a ko, {we set $s_i = 1 - s_i$.} We continue the process until $s$ won't change, which is the final settlement of the problem.
In this problem, if $s_i$ is captured and is not a ko, {it should be acted upon next (changing $s_i$'s value).} The neighbors $s_{i-1}$ and $s_{i+1}$ are enough to decide the value of $s_i$, i.e. $\max_{v' \in (p \circ t)(s_i)} d(v', s_i) = \max(d(s_{i-1}, s_i), d(s_{i+1}, s_i)) = 1$. Therefore, $R = 1$ for this problem. 

However, a $4R$-length window is not enough to decide whether $s_i$ should be reasoned or acted upon next. For instance, let $s = 1 1 0 1 0$. In this problem, $s_{[2]} = 0$ is captured and is a ko, so it cannot be changed. Therefore, when we only consider $4$ elements $s_{[0:4]} = 1 1 0 1$, we will receive the ground truth label $y_{[0:4]}= 0 0 0 0$, where $0$ indicates the element shouldn't be changed and $1$ indicates the element should be changed. However, when we consider a problem $s' = 1 1 0 1 1$, $s_{[2]}' = 0$ is captured and is not a ko, so it should be changed. Therefore, when we only consider $4$ elements $s_{[0:4]}' = 1 1 0 1$, the ground truth label is $y_{[0:4]}'= 0 0 1 0$. These two problems share an identical $4R$-length window $1 1 0 1$, but the labels of this window are different, which is obviously ambiguous. 

\vspace{+2mm}
\begin{proof}[\bf Proof of Theorem \ref{thm: find_input_vertices_is_possible}]
Denote the training dataset as
$$
D \subseteq \{(\Tilde{g}(s), s) | s = s_0 s_1 \dots s_{4R}, s = g^{-1}(G)\}, 
$$
where $\Tilde{g}(s) = 1_{2R}(s_{2R})$ is a binary label that $1_{2R} (s_{2R})$ indicates whether the central element of the $4R+1$ interval should be reasoned next. 

By Lemma \ref{lem: interpolation}, let $X^{4R + 1}$ to be $X$ and $\{0, 1\}$ to be $Y$, there exists $\Hat{f}$ s.t. $\Hat{f}(s) = I,\,\forall\, (I, p(v)) \in D$. 
Since $D = \{(\Tilde{g}(s), s) | s = s_0 s_1 \dots s_{4R}, s = g^{-1}(G)\}$, the proof is done. 
\end{proof}

\vspace{+2mm}

\begin{proof}[\bf Proof of Theorem \ref{thm: (1,4R+1)-consist}]

\textbf{(1)} Because $R < \infty$, it's obvious that an interval of length $4R + 1$ can cover the input elements of one reasoning step. 

\textbf{(2)} For any $(4R + 1)$-length interval $I = s_{j_1} \dots s_{j_{4R + 1}}$, denote the central element $s_{j_{2R}}$ as $s_c$. 
For any $s^0, s^1 \in \{s| s = s_0 \dots s_n \supseteq I\}$, i.e. $I$ is a sub-interval of $s^0$ and $s^1$, we prove the problem is $(1, 4R+1)$-consistent by showing $$\Tilde{g}(s^0)|_{s_c} = \Tilde{g}(s^1)|_{s_c}. $$

Since we have two $DAG$'s $g(s^0)$ and $g(s^1)$ in the proof, we will always write $\subseteq g(s^0)$ or $\subseteq g(s^1)$ after a set to distinguish the graph. 

i) When $s_c \in \Tilde{g}(s^0)$, assume $s_c \notin \Tilde{g}(s^1)$. Since $s_c \in \Tilde{g}(s^0)$, by definition \eqref{eq: input_elements}, there exists $v_0 \in t(s_c) \subseteq g(s^0)$ s.t. $\forall\, w \in p(v),\, |s(w)| = 0$. By definition of $R$, we know $p(v_0) \subseteq s_{j_R} \dots s_{j_{3R}} \subseteq I$.
Since $s_c \notin \Tilde{g}(s^1)$, there exists $w_0 \in p(v_0) \subseteq g(s^0)$ s.t. $\forall\, v \in t(s_c) \subseteq g(s^1),\, w_0 \notin p(v) \subseteq g(s^1)$, i.e. $v_0 \notin s(w_0) \subseteq g(s^1)$. 
Since $w_0 \in p(v_0) \subseteq s_{j_R} \dots s_{j_{3R}} \subseteq I$, for $\forall\, v_1 \in t(w_0)$, by definition of $R$, we know $p(v_1) \subseteq s_{j_0} \dots s_{j_{4R}} \subseteq I$. Therefore, we find $p(v_0), p(v_1) \subseteq I,\,v_0 \neq v_1$ and $w_0 \in p(v_0) \subseteq g(s^0)$ and $w_0 \in p(v_1) \subseteq g(s^1)$, which means that $w_0$ belongs to two different sets of elements in $I$ that can be reasoned. Contradiction! Therefore, $s_c \in \Tilde{g}(s^1)$. 

ii) When $s_c \in \Tilde{g}(s^1)$, similarly, we have $s_c \in \Tilde{g}(s^0)$. Taking the contrapositive, when $s_c \notin \Tilde{g}(s^0)$, $s_c \notin \Tilde{g}(s^1)$. 

Therefore, $\Tilde{g}(s^0)|_{s_c} = \Tilde{g}(s^1)|_{s_c}$ always holds. 
\end{proof}


\vspace{2mm}
\subsection{Proof - $(n, r)$-Consistency}
\label{sec.(n,r)-consistency}

\begin{proof}[\bf Proof of Theorem \ref{thm:  n,r_implies_n,r'}]
Since the problem is $(n, r)$-consistent, we verify that the problem is also $(n, r')$-consistent, $\forall\, r' \geq r$. 

\textbf{(1)} It's obvious that when the input elements can be covered by a set of $n$ $r$-length intervals, they can also be covered by a set of $n$ $r'$-length intervals, $\forall\, r' \geq r$. 

\textbf{(2)} For $\forall\, \{I_1, \dots, I_n | I_j = s_{j_1} \dots s_{j_{r'}}, j = 1, \dots, n\}$, $\forall s^0, s^1$ that contains these $n$ $r'$-length contiguous sub-sequences i.e. $\forall\, s^0, s^1 \in \{s | s = s_1 s_2 \dots s_{i_0} \supseteq I_j,\, j = 1, \dots, n\}$, we want to show 
$$
\Tilde{g}(s^0)|_{\{s_{j_{c}} |\, j = 1, \dots, n\}} = \Tilde{g}(s^1)|_{\{s_{j_{c}} |\, j = 1, \dots, n\}},
$$
where $s_{j_{c}} = s_{j_{\lfloor\frac{r' + 1}{2}\rfloor}}$. 

Since the problem is $(n, r)$-consistent, we consider $n$ $r$-length intervals 
$$\begin{aligned}
&\{K_1, \dots, K_n | K_j = s_{j_{1 + [\frac{r' - r}{2}]}} \dots s_{j_{r' - [\frac{r' - r}{2}]}}, j = 1, \dots, n\}, &r~mod~2 = r'~mod~2, \\
&\{K_1, \dots, K_n | K_j = s_{j_{1 + [\frac{r' - r + 1}{2}]}} \dots s_{j_{r' - [\frac{r' - r}{2}]}}, j = 1, \dots, n\}, &r~mod~2 = 0,\,r'~mod~2 = 1, \\
&\{K_1, \dots, K_n | K_j = s_{j_{1 + [\frac{r' - r}{2}]}} \dots s_{j_{r' - [\frac{r' - r + 1}{2}]}}, j = 1, \dots, n\}, &r~mod~2 = 1,\,r'~mod~2 = 0. \\
\end{aligned}$$
It's not difficult to verify that the central element of $K_j$ is exactly the central element of $I_j$ i.e. $s_{j_c}$. 
Since the problem is $(n, r)$-consistent, applying the definition on $\{K_1, \dots, K_n \}$ and $s^0, s^1$, we have $$
\Tilde{g}(s^0)|_{\{s_{j_{c}} |\, j = 1, \dots, n\}} = \Tilde{g}(s^1)|_{\{s_{j_{c}} |\, j = 1, \dots, n\}}. 
$$
The proof is done. 
\end{proof}

\begin{proof}[\bf Proof of Theorem \ref{thm: recover_g_by_gamma}]
For $\forall\, s^0 = s^0_0 s^0_1 \dots s^0_n$, for ease of notation, when we write $s^0_i$ for $i < 0$ or $i > n$, we mean $s^0_i = \text{` '}$, which is an empty padding token. By definition of $\gamma$, we have $\gamma(I_1, \dots, I_n) = \Tilde{g}(s^0)|_{\{s_{j_{c}} |\, j = 1, \dots, n\}}$ for any $I_1, \dots, I_n$ to be sub-intervals of $s^0$. 

For $\forall\, s^0_{i_0} \in s^0$, define 
$$\Hat{g}(s^0_{i_0}) = \gamma(I_1, \dots, I_n)|_{s^0_{i_0}}, $$
where $I_1 = s^0_{i_0 - [\frac{r - 1}{2}]} \dots s^0_{i_0 + [\frac{r}{2}]}$, and $I_2, \dots, I_n$ are arbitrary sub-intervals of $s^0$. It's easy to verify that $s^0_{i_0}$ is the central element of $I_1$, i.e. $s^0_{i_0} = s_{1_c}$. 
Since $\gamma(I_1, \dots, I_n) = \Tilde{g}(s^0)|_{\{s_{j_{c}} |\, j = 1, \dots, n\}}$, we have 
$$\Hat{g}(s^0_{i_0}) = \gamma(I_1, \dots, I_n)|_{s^0_{i_0}} = \gamma(I_1, \dots, I_n)|_{s_{1_c}} = \Tilde{g}(s^0)|_{s_{1_c}} = \Tilde{g}(s^0)|_{s^0_{i_0}}. $$
Traversing $s^0$ by letting $s^0_{i_0} = s^0_{1}, \dots, s^0_{n}$, we have found all the elements involved in the next reasoning step, i.e. $$\bigcup \Tilde{g}(s) = \bigcup_{i=1}^{n} \Tilde{g}(s^0)|_{s^0_{i}} = \bigcup_{i=1}^{n} \Hat{g}(s^0_{i}). $$
\end{proof}

\begin{proof}[\bf Proof of Theorem \ref{thm: learn_gamma_is_possible}]
By Lemma \ref{lem: interpolation}, let $X^{r \times n}$ be $X$ and $2^n$ be $Y$, $D = \{(\gamma(I_1, \dots, I_n), \{I_1, \dots, I_n\})\}$,  there exists $\Hat{\gamma}$ s.t. $\Hat{\gamma}(I_1, \dots, I_n) = \gamma(I_1, \dots, I_n),\,\forall\, (\gamma(I_1, \dots, I_n), \{I_1, \dots, I_n\}) \in D$. 
\end{proof}


\begin{table}[h]
\fontsize{9pt}{9pt}\selectfont 
\caption{
Experimental examples of the CoT process. See Appendix~\ref{appendix.examples} for explanations of the symbols used.
} 
\vspace{3mm}
\centering
\begin{minipage}[l]{0.35\textwidth}
\subfigure[arctan]{
\scalebox{1.0}{
\begin{tabular}{l|l}
\toprule
Input[0]:     & a, b \\
Output[0]:      & arctan(a/b) \\
\bottomrule
\end{tabular}
}
}
\subfigure[arithmetic in prime field $F_7$]{
\scalebox{1.0}{
\begin{tabular}{l|l}
\toprule
Input[0]:      &  (0+4-(2-3*6))*(4+0) \\
Output[0]:      &  (\ \,4~\,-(2-~4~~))*\ \ \,4 \\
\midrule
Input[1]:      & (4-(2-4))*4  \\
Output[1]:      & (4-\ \ \,5 \ )*4 \\
\midrule
Input[2]:      & (4-5)*4  \\
Output[2]:      & \ ~6\ ~*4 \\
\midrule
Input[3]:      &  6*4 \\
Output[3]:      & \ \,3 \\
\bottomrule
\end{tabular}
}
}
\subfigure[multiplication (1-line)]{
\scalebox{1.0}{
\begin{tabular}{l|l}
\toprule
Input[0]:  &  1*3=? \\
Output[0]: & 1*3=1+? \\ 
\midrule
Input[1]:  &  1*3=1+? \\
Output[1]: & 1*3=1+1+? \\ 
\midrule
Input[2]:  &  1*3=1+1+? \\
Output[2]: & 1*3=1+1+1 \\ 
\midrule
Input[3]:  &  1*3=1+1+1 \\
Output[3]: & 1*3=2+1 \\
\midrule
Input[4]:  &  1*3=2+1 \\
Output[4]: & 1*3=3 \\
\bottomrule
\end{tabular}
}
}
\end{minipage}
\begin{minipage}[l]{0.35\textwidth}
\scalebox{1.0}{
\subfigure[addition (1-line)]{
\begin{tabular}{l|l}
\toprule
Input[0]:  &  285+9805=? \\
Output[0]: &  285+9805=c0 \\
\midrule
Input[1]:  &  285+9805=c0 \\ 
Output[1]: & 285+9805=?90 \\
\midrule
Input[2]:  &  285+9805=?90 \\
Output[2]: & 285+9805=c090 \\
\midrule
Input[3]:  &  285+9805=c090 \\ 
Output[3]: & 285+9805=10090 \\
\bottomrule
\end{tabular}
}
}
\scalebox{1.0}{
\subfigure[addition (2-line)]{
\begin{tabular}{l|l}
\toprule
Input[0]:  &  ~~285+~~9805=~~~~~~? \\
           &  ~~~~~I~~~~~~\,~~i~~~~~~\,~~J \\
Output[0]: &  ~~285+~~9805=~~~~~c0 \\ 
           &  ~~~~I~~~~~~\,~~i~~~~~~\,~~J \\
\midrule
Input[1]:  &  ~~285+~~9805=~~~~~c0 \\ 
           &  ~~~~I~~~~~~\,~~i~~~~~~\,~~J \\
Output[1]: &   ~~285+~~9805=~~~~?90 \\
           &  ~~I~~~~~~\,~~i~~~~~~~\,~~J \\
\midrule
Input[2]:  &  ~~285+~~9805=~~~~?90 \\
           &  ~~I~~~~~~\,~~i~~~~~~~\,~~J \\
Output[2]: &  ~~285+~~9805=~~c090 \\
           &  I~~~~~~~~~i~~~~~~~~J \\
\midrule
Input[3]:  &  ~~285+~~9805=~~c090 \\ 
           &  I~~~~~~~~~i~~~~~~~~J \\
Output[3]: &  ~~285+~~9805=10090 \\
           &  I~~~~~~~~i~~~~\,~~~J\,~ \\
\bottomrule
\end{tabular}
}
}
\end{minipage}
\begin{minipage}[r]{0.28\textwidth}
\subfigure[addition (3-line)]{
\scalebox{0.95}{
\begin{tabular}{l|r}
\toprule
Input[0]:  &   89283 \\
  &     3360 \\
  &          ? \\
Output[0]:   &     ?3 \\
\midrule
Input[1]:   &  89283 \\
   &    3360 \\
   &        ?3 \\
Output[1]:	&  c43 \\
\midrule
Input[2]:   &  89283 \\ 
   &    3360 \\
   &      c43 \\
Output[2]:  &  ?643 \\
\midrule
Input[3]:   &  89283 \\
   &    3360 \\ 
  &    ?643 \\
Output[3]:  & c2643 \\
\midrule
Input[4]:   &  89283 \\
   &    3360 \\
   &  c2643 \\
Output[4]:  & 92643 \\
\bottomrule
\end{tabular}
}
}
\scalebox{0.95}{
\subfigure[parity (2-line)]{
\begin{tabular}{l|l}
\toprule
Input[0]:  &  1011 \\
           &  ? \\
Output[0]: &  1? \\ 
\midrule
Input[1]:  &  1011 \\ 
           &  1? \\
Output[1]: &  11? \\
\midrule
Input[2]:  &  1011 \\ 
           &  11? \\
Output[2]: &  110? \\
\midrule
Input[3]:  &  1011 \\ 
           &  110? \\
Output[3]: &  1101 \\
\bottomrule
\end{tabular}
}
}
\end{minipage} 
\\
\begin{minipage}[c]{1.0\textwidth}
\vspace{4mm}
\subfigure[multiplication (8-line)]{
\scalebox{0.9}{
\begin{tabular}{l|p{0.1mm}p{0.1mm}p{0.1mm}p{1mm}|l|p{0.1mm}p{0.1mm}p{0.1mm}p{0.1mm}p{1mm}|l|p{0.1mm}p{0.1mm}p{0.1mm}p{0.1mm}p{0.1mm}p{1mm}}
\toprule
Input[0]:  &  & 2 & 3 & 4 & Input[2]: &  &  & 2 & 3 & 4 & Input[4]: &  & &  & 2 & 3 & 4 \\
  &  E &  & & S & & & E & & & S    & & & & E & & & S\\
  &   &   & &  I & & &  &  & I &   & & & &  & I & & \\
  & & & 5 & 6    & & &  & & 5 & 6  & & & &  & & 5 & 6\\
  & & e & & s    & & &  & e & & s  & & & &  & e & & s\\
  & & & & i      & & & & & & i     & & & & & & & i \\
  & & F & & T    & & & F & & T &   & & & F & & T & & \\
  & & & & J      & & & & & J &     & & & & & J & &\\
Output[0]:  & & & & I & Output[2]: & & & & I & & Output[4]: & & & & I & &\\
  & & & i & & & & & & i &     & & & & & & i &\\
  & & F & & T & & & F & & T   & & & & F & & T & &\\
  & & & J & & & &  & J & &    & & & & J & & &\\
  & & & 2 & 4 & & & & 1 & 8 & & & & & 1 & 2 & & \\
Answer[0]: & & & 2 & 4 & Answer[2]: & & & 4 & 0 & 4 & Answer[4]: & & & 3 & 1 & 0 & 4 \\
\midrule
Input[1]:  & & 2 & 3 & 4 & Input[3]: & &  & 2 & 3 & 4  & Input[5]: & & &  & 2 & 3 & 4\\
  &   E &  & & S & & & E & & & S    & & & & E & & & S\\
  &   &   &  & I & & & &   & I &    & & & & & I & &\\
  & & & 5 & 6 & & & & & 5 & 6       & & & & & & 5 & 6\\
  & &  e & & s & & & &  e & & s     & & & & &  e & & s\\
  & & & i &  &  & & & & i &         & & & & & & i & \\
  & & F & & T & & & F & & T &       & & & F & & T & &\\
  & & & J &  & & & & J & &          & & & & J & & & \\
Output[1]: & & & I & & Output[3]: & & & I & & & Output[5]: & & & I & & &\\
  & & &  & i & & & & & & i       & & & & & & & i\\
  & F & & T & & & F & & T & &    & & F & & T & & & \\
  & & & J &  & & & & J & &       & & & & J & & & \\
  & & 2 & 0 & & & & 1 & 5 & &    & & & 1 & 0 & & & \\
Answer[1]: & & 2 & 2 & 4 & Answer[3]: & & 1 & 9 & 0 & 4 & Answer[5]: & & 1 & 3 & 1 & 0 & 4 \\
\bottomrule
\end{tabular}
}
}
\end{minipage}
\label{tab:examples}
\end{table}

\vspace{2mm}
\subsection{Experimental Data and Implementation Details}
\label{appendix.examples}

Table~\ref{tab:examples} shows some examples of CoT formulations of the experimental reasoning problems. The \textbf{arithmetic} problem is defined in the prime field $F_7$, where the calculations are under the sense of `\textbf{mod 7}'. The \textbf{parity} problem uses `?' in the second line of the input and output to represent the position to be calculated next, $1$ to represent \textit{odd} and $0$ to represent \textit{even}. The first line in the input is the input bit sequence. 
The \textbf{addition} problems use `?' to represent $0$ being carried from the right and `c' to represent $1$ being carried from the right. In all problems, * is equivalent to $\times$. We use * instead of $\times$ for ease of aligning chars. 

The \textbf{multiplication} problems use `E', `S', `e', `s', `I', `i', `F', `T', and `J' to represent specific positions of the CoT process, where some descriptions have been described in Sec. \ref{sec:experiment}. For each CoT step, `i' moves one position left. When `i' is at the same position as `e', it moves to the position of `s'. When `i' moves back to `e', `I' moves one position to the left. When `I' is at the position of `E', the CoT process ends. `F' and `T' are at the same positions as `e' and `s' at the beginning of the CoT process. For each CoT step, `J' moves one position to the left. When `J' is at the same position as `F', it moves to the position of `T'. Before `J' moves back to `J', `F' and `T' move one position to the left. Each CoT step predicts the next position of `I', `i', `F', `T', `J' and the result. The next CoT step uses the predicted `I', `i', `F', `T', `J' as the input. 

\textbf{Dataset generation:} Both the training dataset and each test dataset are composed of the following length proportions. For the \textbf{arctan} problem, each problem instance is uniformly sampled from the annulus region. For the \textbf{arithmetic} problem, the length of a problem instance in the dataset is generated to be as close to the maximum length as possible. For the \textbf{parity} problem, the length of a problem instance in the dataset is uniformly sampled from $1$ to the maximum length. For the \textbf{addition} problem, the length of the adders is uniformly sampled from $1$ to the maximum length.  For the \textbf{multiplication} problem, the length of the multipliers is also uniformly sampled from $1$ to the maximum length. 

\textbf{Training:} All the problems are trained with $50k$ batches. Each batch contains $256$ CoT steps. For each problem, we first randomly generate an instance of the problem and its detailed CoT steps. Each CoT step is a pair (Input[i], Output[i]), as shown in Table \ref{tab:examples}. We put the CoT steps into a batch until the batch size reaches $256$. 

\textbf{Testing:} When testing the performance after the training, we test $6$ different datasets for each problem, which is shown in Table \ref{tab:experiment_setting} in the main text. Each testing dataset is independently generated with \textbf{$1k$ questions}. We solve each question by CoT using the trained model.
For \textit{arctan}, since it only has one step, we only infer in one step. 
For \textit{arithmetic in $F_7$}, we stop the CoT output generation in a step if (i) the output of the step has only one token/element, or (ii) the output of the step is identical to the input of the step. 
For \textit{parity-[2-line]}, we stop the CoT output generation if (i) `?' isn't shown in the output, or (ii) the number of CoT steps is greater than the number of digits. 
For \textit{addition-[1-line]}, \textit{addition-[2-line]} and \textit{addition-[3-line]}, we stop the CoT output generation if (i) `?' and `c' aren't shown in the output, or (ii) the number of CoT steps is greater than the number of digits.
For \textit{multiplication-[1-line]}, we stop the CoT output generation if (i) `+' is not shown in the output, or (ii) the number of CoT steps is greater than the multipliers. 
For \textit{multiplication-[8-line]}, we stop the CoT output generation if (i) `I' is on the left of the first multiplier (e.g. `I' is on the left of $234$ in the example in Table \ref{tab:examples}), or (ii) the number of CoT steps is greater than the multiplication of the number of digits of multipliers. 

For the \textit{arctan} problem, the predicted value is considered correct if the absolute error is smaller than $0.01$. For the other problems, the final output is considered correct only when it is identical to the ground truth. 
For the dataset of each problem, the accuracy is the number of correctly answered instances divided by the total number of instances. 

\textbf{Implementation details:} The \textit{arctan} problem has only $2$ scalars as the input, and the model only has $3$ fully connected layers.  
The model for the \textit{parity} problem, the \textit{arithmetic} problem, the \textit{addition-[1-line]} problem, the \textit{addition-[3-line]} problem, and the \textit{multiplication-[1-line]} problem has 3 Transformer encoders with relative position encoding. 
The \textit{addition-[2-line]} problem and the \textit{multiplication-[8-line]} problem has 6 Transformer encoders, because they have $R = \infty$ but they are $(n, r)$-consistent. The 1st, 3rd, and 5th encoders use relative position encoding, which is designed to acquire information in each $r$-length interval. The 2nd, 4th, and 6th encoders don't use position encoding, which is designed to exchange information of $n$ intervals. 
The optimizer is Adam and the learning rate is $0.0001$. The training data for each task contains $12.8$M CoT steps and was trained for $1$ epoch.

When a CoT formulation has multiple lines, e.g. \textit{multiplication-[8-line]}, each position has multiple tokens. The model first maps the tokens into vectors and concatenates the vectors. Then a fully connected layer maps the concatenated vector into a vector. The remaining model is the same as single-line problems. 

We pad additional empty token ` ' at the beginning and at the end to guarantee that each position can be the central element of a sequence or interval of length $r$, if the problem is $(n, r)$-consistent. Specifically, we pad $[\frac{r - 1}{2}]$ empty tokens at the beginning and $[\frac{r}{2}]$ tokens at the end.



\vspace{2mm}
\subsection{Related Empirical Work}
\label{appendix.related}

Here we review the related empirical work, which includes the evaluation of LLMs in reasoning, chain-of-thoughts for reasoning, dealing with length generalization (LG) in learning to reason, and dealing with LG in text generation. 

\textbf{Evaluations and limitations of LLMs in reasoning}. Continuing with the discussion about evaluations of the reasoning capabilities of LLMs in Sec.~\ref{sec-intro}, we present a more extensive literature survey here. 
In general, evaluations conducted on several latest LLMs showed that they struggled with many reasoning tasks \cite{gendron2023large,tang2023large}. 

In Sec.~\ref{sec-intro}, we discussed empirical works about LG~\cite{anil2022exploring,dziri2023faith,zhang2022unveiling}. In these papers, the authors also tried to mitigate the problem through improved training and CoT~\cite{anil2022exploring}, improved prompting and fine-tuning of LLMs~\cite{zhang2022unveiling}, and curriculum learning~\cite{abbe2023generalization}. An evaluation of the deductive reasoning capability of LLMs was also conducted in~\cite{prystawski2023think}, which shows that CoT helps improve the results, but does not achieve perfect accuracy. None of them studied the LG problem theoretically as we do. Below, we focus on surveying other empirical works. Many of them identified limitations of LLMs in solving different reasoning problems, but few have characterized the limitations in a formal manner to facilitate theoretical investigation.

\cite{meadows2023symbolic} created a dataset specifically for mathematical reasoning that can be perturbed. They showed that perturbations of the tasks heavily affect the results, reducing F1 score from 97\% to 17\%, which suggests that inference is likely to be dominated by surface-level patterns unrelated to the deeper understanding of the mathematical operators. However, this evaluation was done using only BERT~\cite{devlin2018bert} based models, but not on more recent LLMs like ChatGPT and GPT4. \cite{wu2023reasoning} used ``counterfactual'' tasks that deviate from the standard reasoning tasks to evaluate LLMs. It was found that the performance degrades substantially compared to the default conditions, which again suggests that while LLMs can perform reasoning to some extent, they often rely on narrow, non-transferable procedures or surface patterns for task-solving. A counterfactual-based evaluation was also done in \citep{li2023counterfactual}, which reached the same conclusion.   

\cite{liu2023evaluating} evaluated ChatGPT and GPT-4 on logical reasoning. The results showed that they do relatively well on well-known public domain datasets, but their performances drop substantially when newly released and out-of-distribution datasets are used. \cite{xu2023large} also evaluated LLMs using logical reasoning  (deductive, inductive, abductive, and mixed-form reasoning) and gave pros and cons of LLMs. 
\cite{she2023scone} created a dataset for reasoning involving negations and evaluated LLMs and showed poor results. \cite{ando2023evaluating} created a dataset, originally designed for psychological experiments to assess human logical abilities in syllogistic reasoning. 
The authors examined three types of biases observed in human syllogistic reasoning: \textit{belief biases}, \textit{conversion errors}, and \textit{atmosphere effects}. The evaluation on LLMs showed that they struggle with problems involving these biases too. \cite{tan2023towards} created a dataset to evaluate LLMs on temporal reasoning and showed some weaknesses of LLMs. They then proposed an approach to improve the results.   

\textbf{Chain of thoughts (CoT) and variants}. Earlier prompting for solving reasoning problems using LLMs only states the question and the answer. They found that these two pieces of information are insufficient for LLMs to learn to perform effective reasoning. Then \textit{chain of thought} (CoT) prompting~\citep{wei2022chain} was proposed to improve the situation. CoT basically contains
the detailed intermediate reasoning steps between the question and the answer for fine-tuning the LLMs, which significantly enhance LLMs' reasoning capabilities~\cite{chung2022scaling,hsieh2023distilling,mukherjee2023orca,fu2023chain}. \cite{saparov2022language} created a synthetic dataset generated based on first-order logic. They then parsed the generated CoT into symbolic proofs for formal analysis. It was shown that LLMs are capable of reasoning. The success of CoT has encouraged researchers to refine the technique and also propose variations of the technique. 

For example,~\cite{chen2023measuring} proposed a metric to measure the effectiveness of CoT and a technique to improve CoT for vision-language models. \cite{wang2023making} studied using multiple reasoning paths and positive and negative answers to improve CoT reasoning. 
\cite{zhang2023cumulative} proposed cumulative reasoning, which employs LLMs in a cumulative and iterative manner to emulate the human thought process. \cite{qi2023art} proposed a divide-and-conquer algorithm that simulates the self-questioning and recursive thinking process of humans to improve CoT.~\cite{wang2023learning} investigated how to incorporate into relatively small LMs the capabilities of multi-step reasoning and CoT.
\cite{wang2022towards} found that even logically invalid CoT also helps to reason. This was confirmed in~\citep{schaeffer2023invalid}. To deal with unsound inferences, \cite{poesia2023certified} introduced a class of external tools for LLMs called guides that use states and incremental constraints to guide the generation in reasoning. A related work on using external tools was done in~\cite{xu2023rewoo}.
\cite{wang2022self} improved CoT using multiple paths and consistency checks. 
\cite{ling2023deductive} studied the verification of CoT. \cite{stolfo2023understanding} identified part of an LLM responsible for reasoning. In a different direction,~\cite{yang2022chain} argued that the prevailing approach to CoT prompt selection through trial and error is unsatisfactory. They then proposed a principled approach for multi-domain LLM CoT prompt selection.

Several researchers also broadened the CoT method and proposed the neural symbolic \textit{code prompting}~\citep{hu2023code},
\textit{program of thoughts}~\citep{chen2022program,cheng2023binding}, \textit{tree-of-thoughts}~\citep{yao2023tree,long2023large}, \textit{tree-of-mixed-thoughts}~\citep{hu2023tree}, \textit{tree of uncertain thoughts}~\citep{mo2023tree}, \textit{hypergraph-of-thoughts}~\citep{yao2023thinking}, \textit{recursion of thoughts}~\citep{lee2023recursion}, \textit{chain of knowledge}~\citep{wang2023boosting}, \textit{chain of simultaneous thoughts}~\citep{shao2022chaining}, \textit{graph-of-thoughts}~\citep{yao2023beyond}, \textit{faithful chain of thoughts}~\citep{lyu2023faithful}, and thought expansion~\cite{kim-etal-2023-athena}. 
Further, \cite{bi2023program} proposed a complexity measure and chose the optimal complexity to improve the \textit{program of thoughts}~\citep{chen2022program}. \cite{wang2023exploring} proposed a method to improve the generation of equations from natural language questions as the intermediate step to answer the original question. \cite{gao2023pal} combined CoT and Program-Aided Language Models (PAL) for improved reasoning. 

\textbf{Empirical work on LG in reasoning.} Many empirical attempts have been made to modify the Transformer and/or learning biases to better solve the LG problem. \cite{duan2023interpolation} and \cite{zhou2023algorithms} proposed some bias calibration methods to enable the model to learn suitable attention biases. However, their methods are still unable to solve addition perfectly or multiplication at all. \cite{jelassi2023length} proposed to add a small number of long sequences in the training to help solve long sequences, but still could not solve the multiplication problem. 
\cite{chi2023transformer} proposed a Transformer variant with weight-sharing, a working memory, etc, to improve LG for regular languages. It can solve some problems but is still unable to deal with multiplication or addition. Different attention and new architectures are also proposed in \cite{nangia2018listops,bowman2015tree,tay2021long,chowdhury2023monotonic,chowdhury2023recursion}. However, they don't use CoT but only the direct input and output in training. Their methods work on various text copying and list operations but don't solve these problems and do not work on more complex large-number addition and multiplication. Theoretically, learning to reason without intermediate steps (or CoT) in training is not learnable~\cite{wies2023sub,feng2023towards}. Our work needs no specialized architectures for different problems, but just a vanilla Transformer with relative position encoding. 

\textbf{LG in text generation.} LG is also studied for text generation using Transformers, which have the problem when
training on short text while evaluating on longer text \cite{sun2022length,press2021train,ruoss2023randomized,han2023lm}. However, this body of work is very different from the LG problem in reasoning. 

\end{document}